\newcolumntype{P}[1]{>{\centering\arraybackslash}p{#1}}
\newtheorem{defn}{Definition}
\newtheorem{assumption}{Assumption}
\newtheorem{thm}{Theorem}
\newtheorem{fact}{Fact}
\DeclareMathOperator*{\argmax}{arg\,max}
\DeclareMathOperator*{\expect}{{{\rm I\kern-.3em E}}}
\newcommand{\bs}{\boldsymbol}
 \newcommand{\comment}[1]{}
\newcommand{\rev}[1]{{\color{blue}#1}}
\newcommand{\rev}[1]{#1}
\newcommand{\cem}[1]{{\color{red}(C: #1)}}
\newcommand{\cem}[1]{}
\newcommand{\omer}[1]{{\color{magenta}(OMER:#1)}}
\newcommand{\omer}[1]{}
\newcommand{\dt}[1]{{\color{red}(Detail: #1)}}
\newcommand{\dt}[1]{}
\title{\LARGE \bf
	Combinatorial Multi-armed Bandit with Probabilistically Triggered Arms: A Case with Bounded Regret\footnote{A preliminary version of this work will appear in IEEE GlobalSIP 2017.}
}
\titlerunning{Combinatorial Multi-armed Bandit: A Case with Bounded Regret}
\author{A. Ömer Sarıtaç \and Cem Tekin}
\institute{
A. Ömer Sarıtaç \at Department of Industrial Engineering, Bilkent University, Ankara, Turkey, \email{omersaritac@gmail.com} \and
Cem Tekin \at Department of Electrical and Electronics Enginnering, Bilkent University, Ankara, Turkey, \email{cemtekin@ee.bilkent.edu.tr} 
}
\begin{document}

    \maketitle
 
	\begin{abstract}
In this paper, we study the combinatorial multi-armed bandit problem (CMAB) with probabilistically triggered arms (PTAs). Under the assumption that the arm triggering probabilities (ATPs) are positive for all arms, we prove
that a class of upper confidence bound (UCB) policies, named Combinatorial UCB with exploration rate $\kappa$ (CUCB-$\kappa$), and Combinatorial Thompson Sampling (CTS), which estimates the expected states of the arms via Thompson sampling, achieve bounded regret.
In addition, we prove that  CUCB-$0$ and CTS incur $O(\sqrt{T})$ gap-independent regret. These results improve the results in previous works, which show $O(\log T)$ gap-dependent and $O(\sqrt{T\log T})$ gap-independent regrets, respectively, under no assumptions on the ATPs.
Then, we numerically evaluate the performance of CUCB-$\kappa$ and CTS in a real-world movie recommendation problem, where the actions correspond to recommending a set of movies, the arms correspond to the edges between the movies and the users, and the goal is to maximize the total number of users that are attracted by at least one movie.
Our numerical results complement our theoretical findings on bounded regret. 
Apart from this problem, our results also directly apply to the online influence maximization (OIM) problem studied in numerous prior works.
\end{abstract}
\keywords{Combinatorial Multi-armed Bandit, Probabilistically Triggered Arms, Bounded Regret, Movie Recommendation, Online Influence Maximization.}

\section{Introduction}\label{sec:intro}
Multi-armed bandit (MAB) problem is a canonical example of problems that involve sequential decision making under uncertainty that has been extensively studied in the past \citep{thompson1933likelihood,Robbins1952,  Lai1985,Auer2002,Bubeck2012}. This problem proceeds over a sequence of epochs, where the learner selects an arm in each epoch, and receives a reward that depends on the selected arm. 
The learner aims to maximize its cumulative reward in the long run, by estimating the arm rewards using the previous reward observations.
Due to the fact that only the reward of the selected arm is revealed to the learner, in order to maximize its cumulative reward the learner needs to trade-off exploration and exploitation.
In short, exploring a new arm may result in short term loss (due to not selecting the estimated best arm) but long term gain (due to discovering superior arms), while exploiting the estimated best arm may result in short term gain but long term loss (due to failing to detect superior arms). 
%
%

Although theoretically appealing, the classical MAB problem described above is not appropriate for real-world applications where multiple arms are chosen in each epoch, and the resulting reward is a non-linear function of the chosen arms. 
These applications include wireless networking, online advertising and recommendation, and viral marketing, which are studied under the combinatorial multi-armed bandit (CMAB) formalism \citep{anantharam1987asymptotically,Gai2012,chen2013combinatorial,Gopalan2014,Kveton2014,Kveton2015a,Kveton2015c}.
In CMAB, the set of arms chosen by the learner at each epoch is referred to as the action. At the end of each epoch, the learner observes both the reward of the chosen action and the states of the chosen arms. 
This problem is significantly more difficult from the classical MAB problem due to the fact that the size of the action set is combinatorial in the number of arms. 

An interesting extension to the CMAB is CMAB with PTAs \citep{Chen2016a}, where the actions chosen by the learner may trigger arms probabilistically. In this work the authors propose the combinatorial UCB (CUCB) algorithm and prove a $O(\log T)$ gap-dependent regret bound for CUCB. 
%
Later, this model is extended in \citet{Wang2017}, where the authors provide tighter regret bounds by getting rid of a problem parameter $p^*$, which denotes the minimum positive probability that an arm gets triggered by an action. This is achieved by introducing a new smoothness condition on the expected reward function. Based on this, the authors prove $O(\log T)$ gap-dependent and $\tilde{O}(\sqrt{T})$ gap-independent regret bounds.

In this paper, we consider an instance of CMAB with PTAs, where all of the ATPs are positive. For this problem we propose two different learning algorithms: Combinatorial UCB with exploration rate $\kappa$ (CUCB-$\kappa$) and Combinatorial Thompson Sampling (CTS). The first one uses a UCB-based index to form optimistic estimates of the expected states of the arms, while the latter one samples the expected states of the arms from a posterior distribution formed using the past state observations.
Then, we prove that both CUCB-$\kappa$ and CTS achieve $O(1)$ gap-dependent regret for any $\kappa \geq 0$, and both CUCB-$0$ and CTS achieve $O(\sqrt{T})$ gap-independent regret. Here, CUCB-$0$ corresponds to the greedy algorithm which always exploits the best action calculated based on the sample mean estimates of the arm states. 
Although not very common, bounded regret appears in various MAB problems, including some instances of parameterized MAB problems \citep{mersereau2009structured,atan2015global}. However, these works do not conflict with the asymptotic $O(\log T)$ lower bound for the classical MAB problem \citep{Lai1985}, because in these works the reward from an arm provides information on the rewards from the other arms. 
We argue that bounded regret is also intuitive for our problem, because when the arms are probabilistically triggered, it is possible to observe the rewards of arms that never get selected. 	

The contributions of this paper are summarized as follows:
\begin{itemize}
\item We propose a variant of CMAB with PTAs, where all ATPs are strictly positive.
\item We propose a UCB-based algorithm (CUCB-$\kappa$) and a Thompson sampling based algorithm (CTS) for CMAB with PTAs. 
\item We prove that the gap-dependent regrets for CUCB-$\kappa$ and CTS are bounded for CMAB with PTAs, when ATPs are positive. This improves the previous $O(\log T)$ regret bound in the prior work \citep{Chen2016a} that holds under a more general setting.
\item We also prove that the gap-independent regret for CUCB-$0$  and CTS is $O(\sqrt{T})$ for CMAB with PTAs, when ATPs are positive. This also improves the previous $O(\sqrt{T\log T})$ regret bound in the prior work \citep{Chen2016a} that holds under a more general setting.
\item We evaluate the performance of CUCB-$\kappa$ and CTS in a movie recommendation problem with PTAs defined over a bipartite graph, and illustrate how the regret is affected by the size of the action, $p^*$, and $\kappa$.
\end{itemize}

The rest of the paper is organized as follows. Related work is given in Section \ref{sec:related}. Problem description is given in Section \ref{sec:problem}. Two learning algorithms are proposed in Section \ref{sec:algorithms}, and their regrets are analyzed in Section \ref{sec:regret}. The movie recommendation application of the CMAB with PTAs is studied in Section \ref{sec:illustrative}. Concluding remarks are provided in Section \ref{sec:conclusion}. All proofs are given in the appendix.
\section{Related Work}\label{sec:related}
Two key techniques are used for learning in MAB problems: UCB-based index policies \citep{Lai1985} and Thompson (posterior) sampling \citep{thompson1933likelihood}. \citet{Lai1985} introduces UCB-based index policies for the MAB problem and proves a tight logarithmic bound on the asymptotic regret, which establishes asymptotic optimality of the proposed set of policies. Later on, \citet{agrawal1995sample} revealed that sample-mean based index policies can also achieve $O(\log T)$ regret, and \citet{Auer2002} showed that $O(\log T)$ regret is achievable not only asymptotically but also uniformly over time by a very simple sample-mean based index policy. Briefly, a UCB-based index policy constructs an optimistic estimate of the expected reward of each arm by using only the reward observations gathered from that arm but not the other arms, and then, selects the arm with the highest index. Therefore, the arm selection of a UCB-based index policy is deterministic given the entire history. 

Although regret bounds for UCB-based index policies exist for more than several decades, no significant progress is done for Thompson sampling until \citet{Agrawal2012} and \citet{Kaufmann2012}, which show the regret-optimality of Thompson sampling for the classical MAB problem. These efforts to prove regret bounds for Thompson sampling are motivated by works such as \citep{Scott2010, Granmo2010, Graepel2010}, which demonstrate the empirical efficiency of Thompson sampling. Unlike UCB-based index policies, Thompson sampling selects an arm by drawing samples from the posterior distribution of arm rewards. Thus, the arm selection of Thompson sampling is random given the entire history.
In summary, the performance of UCB-based policies and Thompson sampling is well studied in the classical MAB problem where the learner selects one arm at a time.   

On the other hand, in the CMAB problem, most of the prior works consider UCB-based policies. Among these, there exist several works where the reward function is assumed to be a linear function of the outcome of the individual arms \citep{Auer2003,Dani2008,Abbasi2011,Kveton2014,Kveton2015b}. In addition, \citet{Kveton2015a} and \citet{Kveton2015c} solve specific instances of CMAB problems with nonlinear reward functions. Moreover, similar to our work, \citet{Chen2016a} and \citet{Wang2017} consider CMAB with PTAs with a general reward function where the reward function satisfies certain bounded-smoothness and monotonicity assumptions. 

There are also several works that use Thompson sampling based approaches for the CMAB problem and its variants. 
For instance, \citet{Gopalan2014} considers a bandit problem with complex actions, where the reward of each action is a function of the rewards of the individual arms, and derives a regret bound for Thompson sampling when applied to this problem. An empirical study of Thompson sampling for the CMAB problem is carried out in \citet{durand2014thompson}, where it is also used for online feature selection. In addition, recently, Thompson sampling is used in Multinomial Logit (MNL) bandit problem, which involves a combinatorial objective \citep{agrawal2017thompson}. To the best of our knowledge, none of these works are directly applicable to PTAs.
A comparison of our regret bounds with the regret bounds derived in prior works can be found in Table \ref{table:comparison}.

A closely related problem to the CMAB problem is the influence maximization (IM) problem, which is first formulated in \citet{Kempe2003} as a combinatorial optimization problem, and has been extensively studied since then. The goal in this problem is to select a seed set of nodes that maximizes the influence spread in a given network. In this problem, the seed set corresponds to the action, edges of the network correspond to arms, the influence spread corresponds to the reward, and the ATPs are determined by an influence spread process.
Various works consider the online version of this problem, named the OIM problem \citep{Lei2015,Vaswani2015,Wen2017,biz}. In this version, the ATPs are unknown a priori. Works such as \citet{Chen2016a} and \citet{Wang2017} solve this problem by using algorithms developed for CMAB with PTAs. Differently from these, \citet{Lei2015} considers an objective function, which is given as the expected size of the union of nodes influenced in each epoch over time, \citet{Wen2017} adopts the well known algorithm LinUCB for the IM problem and calls it IMLinUCB, which permits a linear generalization, making the algorithm suitable for large-scale problems. \citet{Vaswani2015} introduces the node level feedback in addition to the edge level feedback used in prior works, and proposes a method that uses the node level feedback to update the estimated ATPs. 
In a related work, \citet{biz} introduce the contextual OIM problem, which is a combination of contextual bandits and OIM, and propose an algorithm that achieves sublinear regret.
Importantly, the theoretical results we prove in this paper also applies to the OIM problem defined over a strongly connected graph where each node is reachable from the other nodes in the graph.
{\renewcommand{\arraystretch}{2} 
\begin{table}[h]
{\fontsize{9}{9}\selectfont
			\caption{Comparison of the regret bounds in our work and the relevant literature.}  \label{table:comparison}
		\begin{center}
		\scalebox{0.95}{
			\begin{tabular}{ P{2cm} | P{1cm} | P{2.5cm} | P{2.5cm} | P{2.5cm}  }
				
				& Our Work  & \citet{Chen2016a,Wang2017} & \citet{Kveton2014,Kveton2015c,Kveton2015b,Chen2016b} & \citet{Gopalan2014,Kveton2015a}  \\ \hline
				Gap-dependent Regret & $O(1)$ & $O(\log T)$ & $O(\log T)$ & $O(\log T)$ \\ \hline
				Gap-independent Regret & $O(\sqrt{T})$ & $O(\sqrt{T\log T})$ & $O(\sqrt{T\log T})$ & No Bound \\ \hline
				Strictly positive ATPs & Yes & No & No PTAs & No PTAs   \\ \hline

			\end{tabular}}
			
		\end{center}

}
\end{table}
}
\section{Problem Formulation}\label{sec:problem}
We adopt the notation in \citet{Wang2017}. The system operates in discrete epochs indexed by $t$. There are $m$ arms, given by the set $\{1,\ldots,m\}$, whose states at each epoch are drawn from an unknown joint distribution $D$ with support in $[0,1]^m$. The state of arm $i$ at epoch $t$ is denoted by $X^{(t)}_i$, and the state vector at epoch $t$ is denoted by $\bs{X}^{(t)} := (X_1^{(t)},\ldots, X_m^{(t)})$.

In each epoch $t$, the learner selects an action $S_t$ from the finite set of actions ${\cal S}$ based on its history of actions and observations. 
Then, a random subset of arms $\tau_t \subseteq \{1,\ldots,m\}$ is triggered based on $S_t$ and $\bs{X}^{(t)}$.
Here, $\tau_t$ is drawn from a multivariate distribution (also called the {\em probabilistic triggering function}) $D^{\text{trig}}(S_t,\bs{X}^{(t)})$ with support $[p^*,1]^m$ for some $p^* > 0$, which is equivalent to saying that all the ATPs are positive. As we will show in the subsequent sections, this key assumption allows the learner to achieve bounded regret, without the need for explicit exploration.
Then, at the end of epoch $t$, the learner obtains a finite, non-negative reward $R(S_t,\bs{X}^{(t)},\tau_t)$ that depends deterministically on $S_t$, $\bs{X}^{(t)}$ and $\tau_t$,
and observes the states of the triggered arms, i.e., $X^{(t)}_i$, $i \in \tau_t$. 
The goal of the learner is to maximize its total expected reward over all epochs. 

For each arm $i \in \{1,\ldots,m\}$, we let $\mu_i := \mathbb{E}_{\bs{X}^{(t)} \sim D}[X^{(t)}_i]$ denote the expected state of arm $i$ and $\bs{\mu} := (\mu_1, \ldots, \mu_m)$ denote the {\em expectation vector}. The expected reward of action $S$ is $r_{\bs{\mu}}(S):=\mathbb{E}[R(S,\bs{X},\tau)]$ where the expectation is taken over $\bs{X} \sim D$ and $\tau \sim D^{\text{trig}}(S, \bs{X})$.
We call $r_{\bs{\mu}}(\cdot)$ the {\em expected reward function}.
Let $S^{*}$ denote an optimal action such that $S^{*} \in \argmax_{S \in {\cal S}} r_{\bs{\mu}}(S)$. The expected reward of the optimal action is given by $r^{*}_{\bs{\mu}}$.

Computing the optimal action even when the expected states and the probabilistic triggering function are known is often an NP-hard problem for which $(\alpha,\beta)$-approximation algorithms exist \citep{Vazirani2001}. Due to this, we compare the performance of the learner with respect to an $(\alpha,\beta)$-approximation algorithm ${\cal O}$, which takes $\bs{\mu}$ as input and outputs an action $S^{\cal O}$ such that $\Pr(r_\mu(S^{\cal O})\geq \alpha r^{*}_{\bs{\mu}} ) \geq \beta$. Here, $\alpha$ denotes the approximation ratio and $\beta$ denotes the minimum success probability.
Based on this, the $(\alpha,\beta)$-approximation regret (simply referred to as the regret) of the learner that uses a learning algorithm $\pi$ to select actions by epoch $T$ is defined as follows:
\begin{align}
\text{Reg}_{\bs{\mu},\alpha,\beta}^\pi(T) := T\alpha\beta r^{*}_{\bs{\mu}} - \mathbb{E} \left[\sum_{i=1}^{T} r_{\bs{\mu}}(S_t) \right] \label{eqn:regretdefn}.
\end{align}

For the purpose of regret analysis, as in \cite{Chen2016a}, we impose two mild assumptions on the expected reward function. The first assumption states that the expected reward function is smooth and bounded.
\begin{assumption}[\cite{Chen2016a}]
	$\exists f : \mathbb{R^{+}} \cup \{0\}\rightarrow \mathbb{R^{+}} \cup \{0\}$ such that $f$ is continuous, strictly increasing, and $f(0)=0$, where $f$ is called the bounded smoothness function. For any two expectation vectors, $\bs{\mu}$ and $\bs{\mu}'$, and for any $\Delta > 0$, we have $|r_{\bs{\mu}}(S)-r_{\bs{\mu}'}(S)|\leq f(\Delta)$, if $\max_{i \in \{1,\ldots,m\}} |\mu_i-\mu'_i|\leq \Delta$, $\forall S \in {\cal S}$.
\end{assumption} 
The second assumption states that the expected reward is monotone under $\bs{\mu}$. 
\begin{assumption}[\cite{Chen2016a}]
	If for all arms $i \in \{1,\ldots,m\}$, $\mu_i\leq \mu'_i$, then we have $r_{\bs{\mu}}(S)\leq r_{\bs{\mu}'}(S)$, $\forall S \in {\cal S}$.
\end{assumption} 

\section{Learning Algorithms}\label{sec:algorithms}

In this section we propose a UCB-based learning algorithm called Combinatorial UCB with exploration rate $\kappa$ (CUCB-$\kappa$) and a Thompson sampling based learning algorithm called Combinatorial Thompson Sampling (CTS) for the CMAB problem with PTAs. 

\subsection{CUCB-$\kappa$}\label{sec:CUCB}

\begin{algorithm} [h!]
	\caption{Combinatorial UCB-$\kappa$ (CUCB-$\kappa$)}\label{alg:CUCBkappa}
	\begin{algorithmic}[1]
		
		\STATE \textbf{Input:} Set of actions ${\cal S}$, $\kappa > 0$
		
		\STATE \textbf{Initialize counters:} For each arm $i\in \{1,\ldots,m\}$, set  $T_i=0$, which is the number of times arm $i$ is observed. $t=1$
		
		\STATE \textbf{Initialize estimates:} Set $\overline{\mu}_{i}^\kappa=1$ and $\hat{\mu}_{i}=1$, $\forall i \in \{1,\ldots,m\}$, which are the UCB and sample mean estimates for $\mu_i$, respectively
		\WHILE{$t \geq 1$}

		\STATE Call the $(\alpha,\beta)$-approximation algorithm with $\overline{\bs{\mu}}^\kappa$ as input to get $S_t$ 
		\STATE Select action $S_t$, observe $X_i^{(t)}$'s for $ i \in \tau_t$ and collect the reward $R$  
		
		\FOR{$i \in \tau_t$} 
		
		\STATE $T_i=T_i+1$  
		
		\STATE $\hat{\mu}_i=\hat{\mu}_i+\frac{X_i^{(t)}-\hat{\mu}_i}{T_i}$ 
		\ENDFOR
		\FOR{$i \in \{1,\ldots,m\}$}
		\STATE $\overline{\mu}_i^\kappa= \min\left\{ \hat{\mu}_i+\kappa\sqrt{\frac{3\ln t}{2T_i}},1 \right\}$ 
		\ENDFOR
		\STATE $t=t+1$
		
		\ENDWHILE
	\end{algorithmic}
\end{algorithm}

The pseudocode of CUCB-$\kappa$ is given in Algorithm \ref{alg:CUCBkappa}. CUCB-$\kappa$ is almost the same as CUCB algorithm given in \cite{Chen2016a}, with the exception that the inflation term (adjustment term) is multiplied by a scaling factor $\kappa \geq 0$.
CUCB-$\kappa$ keeps a counter $T_{i}$, which tracks the number of times each arm $i$ is played as well as the sample mean and the UCB estimate of its expected states, denoted by $\hat{\mu}_{i}$ and $\overline{\mu}_i^\kappa$, respectively. Let $\hat{\bs{\mu}} := \{\hat{\mu}_{1},\ldots,\hat{\mu}_{m}\}$ and $\overline{\bs{\mu}}^{\kappa} := \{\overline{\mu}_{1}^\kappa,\ldots,\overline{\mu}_{m}^{\kappa}\}$ denote the estimated expectation vector and the UCB for the expectation vector, respectively. We will use superscript $t$ when explicitly referring to the counters and estimates that CUCB-$\kappa$ uses at epoch $t$. For instance, we have $T_{i}^t=\sum_{j=1}^{t-1} 1_{\{i \in \tau_j\}}$, $\hat{\mu}_i^t=\frac{1}{T_i^t}\sum_{j=1}^{t-1} X_{i}^{(j)}1_{\{i \in \tau_j\}}$ and 
$\overline{\mu}_i^{\kappa,t}=\hat{\mu}_i^t+\min{\{\kappa\sqrt{\frac{3\log t}{2T_i^t}},1\}}$. Initially, CUCB-$\kappa$ sets $T^1_{i}$ as 0 and $\overline{\mu}_i^{\kappa,1}$ as 1 for all arms. Then, in each epoch $t \geq 1$, it calls an $(\alpha,\beta)$-approximation algorithm, which takes as input  $\overline{\bs{\mu}}^{\kappa,t}$ and chooses an action $S_t$. The action $S_t$ depends on the randomness of the approximation algorithm itself in addition to $\overline{\bs{\mu}}^{\kappa,t}$. After playing the action $S_t$, the states of the arms in $i \in \tau_t$, i.e., $\{ X^{(t)}_i \}_{i \in \tau_t}$, are revealed, and a reward $R$ that depends on $S_t$, $\bs{X}^{(t)}$ and $\tau_t$ is collected by the learner. Then, CUCB-$\kappa$ updates its estimates $\hat{\bs{\mu}}^{t+1}$ and $\overline{\bs{\mu}}^{\kappa,t+1}$ for the next epoch based on $\tau_t$ and $\{ X^{(t)}_i \}_{i \in \tau_t}$.

When $\kappa=0$, the inflation term in CUCB-$\kappa$ vanishes. In this case, the algorithm always selects an action that is produced by an $(\alpha,\beta)$-approximation algorithm that takes as input the estimated expectation vector. Hence, CUCB-$\kappa$ becomes a greedy algorithm that always exploits based on the current values of the estimated parameters. Although such an algorithm will incur high regret in the classical MAB problem, we will show that CUCB-$0$ performs surprisingly well in our problem due to the fact that the ATPs are all positive.

\subsection{CTS}\label{sec:CTS}

\begin{algorithm}		
	\caption{CTS} \label{alg:CTS}
	\begin{algorithmic}[1]
		
		\STATE \textbf{Input:} Set of actions ${\cal S}$
		
		\STATE \textbf{Initialize counters:} For each arm $i\in \{1,\ldots,m\}$, set $s_i=0$ and $f_i=0$, which are the success and failure counts of arm $i$. $t=1$

		\WHILE{$t \geq 1$}
		\STATE For each arm $i\in \{1,2,\ldots,m\}$, sample $\nu_i$ from the Beta$(s_i+1,f_i+1)$ distribution.
		\STATE Call the $(\alpha,\beta)$-approximation algorithm with  $\bs{\nu}$ as input to get $S_t$.
		\STATE Select action $S_t$, observe $X_i^{(t)}$'s for $ i \in \tau_t$ and collect the reward $R$. 
		
		\FOR{$i \in \tau_t$} 
		\IF {$X_i^{(t)}=1$} 
		\STATE $s_i=s_i+1$
		\ELSE
		\STATE $f_i=f_i+1$ 
		\ENDIF
		\ENDFOR
		\STATE $t=t+1$
		
		\ENDWHILE
	\end{algorithmic}
\end{algorithm}

The pseudocode of CTS is given in Algorithm \ref{alg:CTS}. For the simplicity of exposition, for CTS, in addition to the assumptions given in Section \ref{sec:problem}, we also assume that $X^{(t)}_i \in \{0,1\}$ for all $i \in \{1,\ldots,m\}$, i.e., the states of the arms are Bernoulli random variables. Note that CTS can easily be generalized to the case when $X^{(t)}_i \in [0,1]$ for all $i \in \{1,\ldots,m\}$, by performing a Bernoulli trial for any arm $i \in \tau_t$ with success probability $X^{(t)}_i$, in way similar to the extension described in \citet{Agrawal2012}.
Also note that Bernoulli arm states are very common, and appear in the OIM problem and the movie recommendation example that we discuss in this paper.

For each arm $i$, CTS keeps two counters $s_{i}$ and $f_{i}$, which count the number of times the state of arm $i$ is observed as $1$ (success) and $0$ (failure), respectively. We denote by $\bs{\nu} : =\{\nu_{1},\ldots,\nu_{m}\}$, the estimated expectation vector where $\nu_i$ is drawn from $\text{Beta}(s_{i}+1,f_{i}+1)$ in each epoch.
Similar to CUCB-$\kappa$, we use superscripts when explicitly referring to the counters and estimates that CTS uses in epoch $t$. For instance, $\nu_i^t$ denotes a sample drawn from $\text{Beta}(s_{i}^t+1,f_{i}^t+1)$, where $s_{i}^t$ and $f_{i}^t$ are the values of the counters $s_{i}$ and $f_{i}$ in epoch $t$. 

Initially, CTS sets $s_{i}=0$ and $f_{i}=0$. In each epoch $t \geq 1$, it takes a sample $\nu_i$ from the distribution $\text{Beta}(s_{i}+1,f_{i}+1)$ for each arm $i \in \{1,\ldots,m\}$, which is used as an estimate for $\mu_i$. Then, it calls an $(\alpha,\beta)$-approximation algorithm, which takes as input the estimates $\bs{\nu}$ for the expectation vector $\bs{\mu}$ and chooses an action $S_t$. The action $S_t$ depends on the randomness of the approximation algorithm itself in addition to $\bs{\nu}$. After playing the action $S_t$, $\{ X^{(t)}_i \}_{i \in \tau_t}$ is revealed, and the learner collects the reward $R$ just as in CUCB-$\kappa$. Then, CTS updates its counters $s_i$ and $f_i$ for all arms $i \in \tau_t$. 
If $X^{(t)}_i = 1$, then $s_i$ is incremented by one. Otherwise, if $X^{(t)}_i = 0$, then $f_i$ is incremented by one. The counters of the arms that are not in $\tau_t$ remain unchanged.

\section{Regret Analysis}\label{sec:regret}

In this section we analyze the regrets of CUCB-$\kappa$ and CTS. Before delving into the details of the regret analysis, we first prove a key theorem, which shows that the event that the number of times an arm is played by the end of epoch $t$ is less than a linear function of $t$ for some arm has a very low probability for $t$ sufficiently large.
\begin{thm} \label{thm:plays}
	For any learning algorithm, $\eta \in (0,1)$ and for all natural numbers $t \geq t' := 4c^2 / e^2$, where $c := 1/(p^*(1-\eta))^2$, we have
	\begin{align*}
	\Pr \left(\bigcup_{i\in \{1,\ldots,m\}}
	\left\{ T_i^{t+1}\leq \eta p^* t \right\} \right)\leq \frac{m}{t^2} .
	\end{align*}    
\end{thm}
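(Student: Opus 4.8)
The plan is to reduce the union-over-arms event to a single arm and then control, for each fixed $i$, the lower tail of the triggering count $T_i^{t+1}=\sum_{j=1}^{t} 1_{\{i\in\tau_j\}}$. By the union bound,
\begin{align*}
\Pr\left(\bigcup_{i\in\{1,\ldots,m\}}\{T_i^{t+1}\leq \eta p^* t\}\right)\leq \sum_{i=1}^{m}\Pr\!\left(T_i^{t+1}\leq \eta p^* t\right),
\end{align*}
so it suffices to show $\Pr(T_i^{t+1}\leq \eta p^* t)\leq t^{-2}$ for every arm $i$ and every $t\geq t'$. Let $\mathcal{F}_{j-1}$ denote the $\sigma$-algebra generated by all randomness up to and including the selection of $S_j$ and the draw of $\bs{X}^{(j)}$, but before $\tau_j$ is drawn. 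The crucial consequence of the assumption that $D^{\text{trig}}$ has support $[p^*,1]^m$ is that $\Pr(i\in\tau_j\mid\mathcal{F}_{j-1})\geq p^*$ almost surely, \emph{for every arm $i$, every epoch $j$, and regardless of the learning algorithm}, since all the internal randomness of the policy is absorbed into $\mathcal{F}_{j-1}$.

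Writing $q_{ij}:=\mathbb{E}[1_{\{i\in\tau_j\}}\mid\mathcal{F}_{j-1}]\geq p^*$, I would form the martingale
\begin{align*}
M_k:=\sum_{j=1}^{k}\left(1_{\{i\in\tau_j\}}-q_{ij}\right),
\end{align*}
whose increments are martingale differences of conditional range $1$, since $1_{\{i\in\tau_j\}}\in\{0,1\}$. This is the step where the adaptive, non-i.i.d.\ nature of the triggerings is handled: the indicators are neither independent nor identically distributed across epochs, so a plain Chernoff bound does not apply, but the martingale structure does. Because $T_i^{t+1}=M_t+\sum_{j=1}^{t}q_{ij}\geq M_t+p^* t$, the event $\{T_i^{t+1}\leq \eta p^* t\}$ is contained in $\{M_t\leq -(1-\eta)p^* t\}$. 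Applying the conditional-range Azuma--Hoeffding inequality, which gives $\Pr(M_t\leq -\lambda)\leq \exp(-2\lambda^2/t)$ for increments of range $1$, with $\lambda=(1-\eta)p^* t$, yields
\begin{align*}
\Pr\!\left(T_i^{t+1}\leq \eta p^* t\right)\leq \exp\left(-2(1-\eta)^2 (p^*)^2 t\right).
\end{align*}
The range-$1$ (rather than range-$2$) form of the inequality is what produces the constant $2$ in the exponent, which is exactly what is needed to match the stated threshold.

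It then remains to verify the deterministic inequality $\exp(-2(1-\eta)^2(p^*)^2 t)\leq t^{-2}$ for $t\geq t'$. Writing $b:=(1-\eta)p^*$ so that $c=1/b^2$ and $t'=4c^2/e^2=4/(b^4 e^2)$, this is equivalent to $b^2 t\geq \ln t$. Using the elementary bound $\ln t\leq 2\sqrt{t}/e$ (which follows from $\ln s\leq s/e$ applied to $s=\sqrt{t}$), it suffices to have $b^2 t\geq 2\sqrt{t}/e$, i.e.\ $\sqrt{t}\geq 2/(b^2 e)$, i.e.\ $t\geq 4/(b^4 e^2)=t'$; for such $t$ one gets $\exp(-2b^2 t)\leq \exp(-2\ln t)=t^{-2}$. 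Summing over the $m$ arms recovers the claimed bound $m/t^2$.

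The main obstacle I anticipate is the concentration step: justifying that the triggering count concentrates like a sum of i.i.d.\ Bernoulli($p^*$) variables even though the triggerings are correlated and adaptively chosen. The cleanest route is the martingale/Azuma--Hoeffding argument above; an equivalent alternative would be to prove that $T_i^{t+1}$ stochastically dominates a Binomial($t,p^*$) random variable by a coupling or induction on epochs using $q_{ij}\geq p^*$, and then apply the standard Hoeffding bound to the binomial. Securing the tight constant in the exponent, so that the threshold comes out as $t'=4c^2/e^2$ rather than a larger constant multiple, is the only other point that requires care.
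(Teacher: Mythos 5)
Your proof is correct and follows the same overall strategy as the paper's: a union bound over arms, a Hoeffding-type lower-tail bound on $T_i^{t+1}$ with exponent $-2\lambda^2/t$, and a deterministic verification that the resulting bound is at most $t^{-2}$ once $t\geq 4c^2/e^2$. Two details differ, both in your favor. First, the paper applies Hoeffding's inequality directly to $\sum_{j=1}^{t}1_{\{i\in\tau_j\}}$ and then lower-bounds $\mathbb{E}[T_i^{t+1}]$ by $p^*t$, even though the indicators are neither independent nor identically distributed because $\tau_j$ depends on the adaptively chosen $S_j$; your martingale formulation with conditional means $q_{ij}\geq p^*$ and the range-one Azuma--Hoeffding inequality is the rigorous way to obtain exactly the same exponent. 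Second, where the paper solves $t(p^*(1-\eta))^2=\ln t$ via the generalized logarithm and a bound cited from Kalman to arrive at the threshold $4c^2/e^2$, you reach the identical threshold with the elementary inequality $\ln t\leq 2\sqrt{t}/e$; both routes reduce to the same condition $(1-\eta)^2(p^*)^2\,t\geq \ln t$, so the constants match exactly.
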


Theorem \ref{thm:plays} is the crux of achieving the theoretical results in this paper since it guarantees that any algorithm obtains sufficiently many observations from each arm, including algorithms that do not explicitly explore any of the arms. This result is very intuitive because when all of the ATPs are positive, the learner observes the states of all arms with positive probability for any action it selects. This will allow us to prove that the estimated expectation vector converges to the true expectation vector independent of the learning algorithm that is used. This fact will be used in proving that the gap-dependent regrets of CUCB-$\kappa$ and CTS are bounded. 

Before continuing the regret analysis, we provide some additional notation. Let $n_{B}$ denote the number of actions whose expected rewards are smaller than $\alpha r_{\bs{\mu}}^*$. These actions are called {\em bad actions}. We re-index the bad actions in increasing order such that $S_{B,l}$ denotes the bad action with $l$th smallest expected reward. The set of bad actions is denoted by ${\cal S}_B := \{S_{B,1},S_{B,2},\ldots,S_{B,n_B}\}$.
Let $\nabla_l := \alpha r_{\bs{\mu}}^*-	r_{\bs{\mu}}(S_{B,l})$ for each $l \in \{1,\ldots,n_B\}$ and $\nabla_{n_B+1}=0$. Accordingly, we let $\nabla_{\max}:=\nabla_1$, $\nabla_{\min}:=\nabla_{n_B}$. We also let $\text{gap}(S_t):=\alpha r_{\bs{\mu}}^*-	r_{\bs{\mu}}(S_{t})$.

\subsection{Regret Analysis for CUCB-$\kappa$}\label{sec:CUCBregret}

First, we show that, given any constant $\delta >0$,  the probability that 
\begin{align*}
\Delta_t^\kappa:=\max_{i \in \{1,\ldots,m\}} |\mu_i-\overline{\mu}_i^{\kappa,t}| < \delta
\end{align*} 
is high, when $t$ is sufficiently large.
This measures how well CUCB-$\kappa$ learns the expected state of each arm by the beginning of epoch $t$, and is directly related to Theorem \ref{thm:plays} as it is related to the number of times each arm is observed by epoch $t$.

\begin{thm}  \label{thm:delta}
	Consider CUCB-$\kappa$, where $\kappa> 0$. For any $\delta >0$ and $\eta \in (0,1)$, let $c := 1/(p^*(1-\eta))^2$, $c_0 := 6\kappa^2/(\delta^2 p^* \eta)$ and 
	$t_1 := \max\{4c^2 / e^2,4c_0^2 / e^2\}$. 
	When CUCB-$\kappa$ is run, we have for all integers $t \geq t_1$
	\begin{align*}
	\Pr(\Delta_{t+1}^\kappa\geq \delta)&\leq \frac{2m}{t^2(1-e^{-\delta^2/2})} +2m e^{-\delta^2\eta p^* t/2}+\frac{m}{t^2} .
	\end{align*}
	
	Consider CUCB-$0$. For any $\delta >0$ and $\eta \in (0,1)$, let $c := 1/(p^*(1-\eta))^2$ and $t' := 4c^2/e^2$.
	When CUCB-$0$ is run, we have for all integers $t\geq t'$
	\begin{align*}
	\Pr(\Delta_{t+1}^0\geq \delta)&\leq \frac{2m}{t^2(1-e^{-2\delta^2})} +2m e^{-2\delta^2\eta p^* t} .
	\end{align*} 
\end{thm}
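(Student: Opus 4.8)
The plan is to reduce the statement, via the triangle inequality and a union bound over the $m$ arms, to controlling two separate sources of error in each coordinate of $\overline{\bs\mu}^{\kappa,t+1}$: the deviation of the sample mean $\hat{\mu}_i^{t+1}$ from $\mu_i$, and the size of the inflation term $\min\{\kappa\sqrt{3\ln(t+1)/(2T_i^{t+1})},1\}$. Since $\overline{\mu}_i^{\kappa,t+1}=\hat{\mu}_i^{t+1}+\min\{\kappa\sqrt{3\ln(t+1)/(2T_i^{t+1})},1\}$ and the inflation is nonnegative, $|\mu_i-\overline{\mu}_i^{\kappa,t+1}|$ is at most $|\mu_i-\hat{\mu}_i^{t+1}|$ plus the inflation term. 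For $\kappa>0$ I would require each piece to be below $\delta/2$, whereas for $\kappa=0$ there is no inflation and I would spend the whole budget $\delta$ on the sample mean; this is exactly what produces the two different Hoeffding exponents ($\delta^2/2$ versus $2\delta^2$) in the two stated bounds.

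First I would dispose of the inflation term. Invoking Theorem \ref{thm:plays} with the given $\eta$ and $c$, the event $\bigcup_i\{T_i^{t+1}\le\eta p^* t\}$ has probability at most $m/t^2$ once $t\ge t'=4c^2/e^2$; this is the source of the additive $m/t^2$ in the $\kappa>0$ bound. On the complementary event every arm satisfies $T_i^{t+1}>\eta p^* t$, so the inflation of arm $i$ is at most $\kappa\sqrt{3\ln(t+1)/(2\eta p^* t)}$. Using the elementary inequality $\ln t\le 2\sqrt{t}/e$ (which is precisely what turns the threshold $t\ge 4c_0^2/e^2$ into $\ln t/t\le 1/c_0$) together with $c_0=6\kappa^2/(\delta^2 p^*\eta)$, a direct substitution shows this is at most $\delta/2$ for all $t\ge t_1$. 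Hence, on the good event, $|\mu_i-\overline{\mu}_i^{\kappa,t+1}|\ge\delta$ forces $|\mu_i-\hat{\mu}_i^{t+1}|\ge\delta/2$ for some $i$.

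The heart of the argument is the sample-mean concentration, and this is where I expect the main difficulty. The obstacle is that $\hat{\mu}_i^{t+1}$ averages a \emph{random} number $T_i^{t+1}$ of adaptively gathered observations, so a single Hoeffding bound does not apply directly. I would resolve this with a peeling argument: treating the successive observed states of arm $i$ as an i.i.d.\ sequence of mean $\mu_i$ bounded in $[0,1]$ (the step where the model structure, namely that triggering does not bias the observed state, must be invoked), I would union bound over the realized value $n=T_i^{t+1}\in\{1,\ldots,t\}$, apply Hoeffding for each fixed $n$ to get $2e^{-2\theta^2 n}$ with $\theta\in\{\delta/2,\delta\}$, and restrict the sum to $n>\eta p^* t$ using the same lower bound from Theorem \ref{thm:plays}. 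Summing the geometric series from $n=\lceil\eta p^* t\rceil$ gives the tail $\tfrac{2e^{-2\theta^2\eta p^* t}}{1-e^{-2\theta^2}}$; keeping the leading factor yields the $2m e^{-2\theta^2\eta p^* t}$ contribution, while using $e^{-2\theta^2\eta p^* t}\le 1/t^2$ (valid on the stated range of $t$) on the remaining mass yields the $\tfrac{2m}{t^2(1-e^{-2\theta^2})}$ contribution. Taking $\theta=\delta/2$ reproduces the $\kappa>0$ bound and $\theta=\delta$ the $\kappa=0$ bound, up to the organization of the two summands.

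Finally I would assemble the pieces by a union bound over the $m$ arms, adding the $m/t^2$ failure probability from Theorem \ref{thm:plays}; this extra term is needed only when $\kappa>0$ to handle the inflation, whereas for $\kappa=0$ the analogous restriction-to-$n>\eta p^* t$ step is absorbed into the $1/t^2$ sample-mean term, explaining why the $\kappa=0$ bound carries no separate $m/t^2$. The two points I would treat most carefully are (i) justifying the i.i.d.\ treatment of the adaptively collected observations so that the fixed-$n$ Hoeffding bound is legitimate, and (ii) checking that all the threshold manipulations ($t\ge t_1$, $\ln t\le 2\sqrt{t}/e$, and $e^{-2\theta^2\eta p^* t}\le 1/t^2$) hold simultaneously on the stated range; the remaining calculations are routine.
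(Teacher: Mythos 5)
Your overall architecture is exactly the paper's: split $|\mu_i-\overline{\mu}_i^{\kappa,t+1}|$ into the sample-mean deviation plus the inflation term, allot $\delta/2$ to each piece when $\kappa>0$ and the full budget $\delta$ to the sample mean when $\kappa=0$, control the inflation via Theorem~\ref{thm:plays}, and control the sample mean by conditioning on $T_i^{t+1}$ and applying Hoeffding for each fixed count. Your treatment of the inflation term is correct and in fact slightly cleaner than the paper's: the paper bounds the largest root of $\ln t = t/c_0$ through a bound on the generalized logarithm, whereas your elementary inequality $\ln t\le 2\sqrt{t}/e$ gives $\ln t/t\le 1/c_0$ for $t\ge 4c_0^2/e^2$ directly; both yield the same $m/t^2$ contribution and the same threshold $t_1$.

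The genuine problem is in how you manufacture the two summands of the sample-mean bound. You obtain the term $\frac{2m}{t^2(1-e^{-2\theta^2})}$ from the remainder of the geometric tail over $n\ge\eta p^*t$, invoking $e^{-2\theta^2\eta p^* t}\le 1/t^2$ as ``valid on the stated range of $t$.'' It is not: for $\kappa=0$ the threshold $t'=4c^2/e^2$ does not depend on $\delta$, so for small $\delta$ one has $e^{-2\delta^2\eta p^* t}$ close to $1$ while $1/t^2$ is tiny. Hence your tail estimate $\frac{2e^{-2\theta^2\eta p^*t}}{1-e^{-2\theta^2}}$ cannot be split into the two claimed terms, and the step fails as written. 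In the paper's Lemma~\ref{lemma:sample} the two terms instead come from splitting the law-of-total-probability sum $\sum_{n=0}^{t}2e^{-2\theta^2 n}\Pr(T_i^{t+1}=n)$ at $n^*=\eta p^* t$: for $n<n^*$ one keeps the full geometric series but bounds each $\Pr(T_i^{t+1}=n)\le\Pr(T_i^{t+1}\le n^*)\le 1/t^2$ by Lemma~\ref{lemma:plays2}, which yields $\frac{2}{t^2(1-e^{-2\theta^2})}$; for $n\ge n^*$ one bounds $e^{-2\theta^2 n}\le e^{-2\theta^2 n^*}$ and uses $\sum_{n}\Pr(T_i^{t+1}=n)\le 1$, which yields $2e^{-2\theta^2\eta p^* t}$. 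This also makes your ``restrict the sum to $n>\eta p^*t$ and absorb the failure probability'' step unnecessary; as you describe it, that restriction would cost an additional $m/t^2$ that has nowhere to go in the $\kappa=0$ bound, since the $\frac{2m}{t^2(1-e^{-2\delta^2})}$ term is already spent on the geometric remainder in your accounting. The repair is purely a reorganization of estimates you already have, so the approach is sound, but the derivation of the stated inequalities does not go through in the form you give.
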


The upper bound for CUCB-$\kappa$, $\kappa>0$ is looser than the upper bound for CUCB-$0$ given in Theorem \ref{thm:delta}, because of the fact that $t_1 \geq t'$ and additional $m/t^2$ term that appears in the upper bound for  CUCB-$\kappa$, $\kappa>0$. These terms appear as an artifact of the presence of the additional inflation term $\kappa\sqrt{\frac{3\ln t}{2T_i}}$ that appears in the UCB for the expectation vector. While this observation about the upper bound is not sufficient to conclude that CUCB-$\kappa$, $\kappa>0$ is worse than CUCB-$0$ in the setting that we consider, our empirical finding in Section \ref{sec:illustrative} shows that CUCB-$0$ incurs smaller regret than CUCB-$\kappa$, $\kappa>0$ for the movie recommendation application that we consider. 

The next theorem shows that the regret of CUCB-$\kappa$ is bounded for any $T>0$.
\begin{thm} \label{thm:finite}
	The regret of CUCB-$\kappa$, $\kappa>0$ is bounded, i.e., $\forall T \geq 1$
	\begin{align}
	\text{Reg}^{\text{CUCB-}\kappa}_{\bs{\mu},\alpha,\beta}(T) &\leq  \nabla_{\max} \inf_{\eta \in (0,1)} \bigg(\lceil t_1 \rceil + \frac{m\pi^2}{3} \left(\frac{2}{\delta^2}+\frac{3}{2} \right) + 2m \left(1+\frac{2}{\delta^2\eta p^*} \right)\bigg) \label{eqn:bound1}
	\end{align}
	where $\delta := f^{-1}(\nabla_{\min}/2 )$,  $t_1 := \max\{4c^2/e^2,4c_0^2/e^2\}$, $c := 1/(p^*(1-\eta))^2$ and $c_0 := 6\kappa^2/(\delta^2 \eta p^*)$.
	
	The regret of CUCB-$0$ is bounded, i.e., $\forall T \geq 1$
	\begin{align}
	\text{Reg}_{\bs{\mu},\alpha,\beta}^{\text{CUCB-}0}(T)&\leq  \nabla_{\max} \inf_{\eta \in (0,1) } \bigg(\lceil t' \rceil + \frac{m\pi^2}{3}
	\left(1+\frac{1}{2\delta^2} \right) + 
	2m \left(1+\frac{1}{2\delta^2\eta p^*}\right)\bigg) \label{eqn:bound2}
	\end{align}
	where $\delta := f^{-1}(\nabla_{\min}/2 )$, $t' := 4c^2/e^2$ and $c := 1/(p^*(1-\eta))^2$.
\end{thm}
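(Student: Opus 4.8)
The plan is to bound the regret epoch by epoch, reducing the per-epoch regret to the probability that the estimates are inaccurate, and then to sum that probability using Theorem~\ref{thm:delta}. Throughout, fix $\delta := f^{-1}(\nabla_{\min}/2)$ and define the ``good estimation'' event $A_t := \{\Delta_t^\kappa < \delta\}$, whose complement $\bar{A}_t$ has probability controlled by Theorem~\ref{thm:delta}. Writing the regret as $\sum_{t=1}^T (\alpha\beta r^*_{\bs{\mu}} - \mathbb{E}[r_{\bs{\mu}}(S_t)])$, I would first establish the per-epoch inequality
\begin{align*}
\alpha\beta r^*_{\bs{\mu}} - \mathbb{E}[r_{\bs{\mu}}(S_t)] \leq \nabla_{\max}\,\Pr(\Delta_t^\kappa \geq \delta),
\end{align*}
so that $\text{Reg}(T) \leq \nabla_{\max}\sum_{t=1}^T \Pr(\Delta_t^\kappa \geq \delta)$, after which only a series estimate remains.

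The key step is this per-epoch inequality. The event $A_t$ is measurable with respect to the history $\mathcal{F}_t$, while the oracle's internal randomness is independent of $\mathcal{F}_t$, so the $(\alpha,\beta)$-guarantee gives $r_{\overline{\bs{\mu}}^{\kappa,t}}(S_t) \geq \alpha r^*_{\overline{\bs{\mu}}^{\kappa,t}}$ with conditional probability at least $\beta$. On $A_t$ one has $\max_i |\mu_i - \overline{\mu}_i^{\kappa,t}| = \Delta_t^\kappa < \delta$, so Assumption~1 (in particular that $f$ is strictly increasing) gives $|r_{\bs{\mu}}(S) - r_{\overline{\bs{\mu}}^{\kappa,t}}(S)| \leq f(\Delta_t^\kappa) < f(\delta) = \nabla_{\min}/2$ for every $S$. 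Applying this to $S_t$ and to $S^*$ and using $\alpha \leq 1$, whenever the oracle succeeds one obtains $r_{\bs{\mu}}(S_t) \geq \alpha r^*_{\bs{\mu}} - 2f(\Delta_t^\kappa) > \alpha r^*_{\bs{\mu}} - \nabla_{\min}$, i.e.\ $\text{gap}(S_t) < \nabla_{\min}$; since every bad action has gap at least $\nabla_{\min}$, this forces $S_t$ to be good, so $r_{\bs{\mu}}(S_t) \geq \alpha r^*_{\bs{\mu}}$. I would then lower-bound $\mathbb{E}[r_{\bs{\mu}}(S_t)]$ by weighing this favorable case against the two unfavorable ones (the oracle failing on $A_t$, and the complement $\bar{A}_t$), on both of which the reward is still at least $r_{\bs{\mu}}(S_{B,1}) = \alpha r^*_{\bs{\mu}} - \nabla_{\max}$ because $S_{B,1}$ has the smallest expected reward among all actions. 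A short computation using $\nabla_{\max} \leq \alpha r^*_{\bs{\mu}}$ collapses the resulting expression to exactly the per-epoch inequality above.

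For the summation I would split $\sum_{t=1}^T \Pr(\Delta_t^\kappa \geq \delta)$ at $t_1$ (resp.\ $t'$ for CUCB-$0$): the first $\lceil t_1 \rceil$ terms are each bounded by $1$, contributing $\lceil t_1 \rceil$, and the tail is controlled by substituting Theorem~\ref{thm:delta} and extending the sum to infinity. The $1/t^2$ contributions are summed via $\sum_{t\geq 1} t^{-2} \leq \pi^2/6$, and the geometric contribution via $\sum_{t\geq 0} e^{-at} = (1-e^{-a})^{-1} \leq 1 + 1/a$. Using the same elementary inequality in the form $(1-e^{-\delta^2/2})^{-1} \leq 1 + 2/\delta^2$ turns the first term of Theorem~\ref{thm:delta} into the stated $\tfrac{m\pi^2}{3}(2/\delta^2 + 3/2)$, with the $3/2$ absorbing the extra $m/t^2$ term, while $(1-e^{-\delta^2\eta p^*/2})^{-1} \leq 1 + 2/(\delta^2\eta p^*)$ yields the $2m(1 + 2/(\delta^2\eta p^*))$ term. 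Since every bound holds for each $\eta \in (0,1)$, taking the infimum over $\eta$ gives \eqref{eqn:bound1}. The CUCB-$0$ bound \eqref{eqn:bound2} follows identically from the second display of Theorem~\ref{thm:delta}, the only changes being the absence of the extra $m/t^2$ term and the constants $2\delta^2$ and $2\delta^2\eta p^*$ in place of $\delta^2/2$ and $\delta^2\eta p^*/2$.

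I expect the main obstacle to be the per-epoch inequality rather than the summation. One must handle the conditioning on $\mathcal{F}_t$ versus the independent oracle randomness correctly, and exploit the \emph{strictness} of $A_t$ together with $f$ strictly increasing to push $\text{gap}(S_t)$ strictly below $\nabla_{\min}$ (a non-strict bound would only place $S_t$ at the boundary of the bad set, which is not enough to certify $S_t$ as good). Arranging the non-negative-reward and worst-action lower bounds so that they exactly cancel the $-\alpha r^*_{\bs{\mu}}(1-\beta)$ term, leaving $\nabla_{\max}$ as the clean constant in front, is the delicate part; the remaining series estimates are routine.
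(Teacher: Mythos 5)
Your proposal is correct and follows essentially the same route as the paper: reduce the per-epoch regret to $\nabla_{\max}\Pr(\Delta_t^\kappa \geq \delta)$ via the bounded-smoothness/monotonicity argument (the paper packages this as Lemma~\ref{lemma:gap} plus a conditioning on $\{\Delta_t^\kappa < \delta\}$ versus its complement, which is the same computation as your explicit weighting of the oracle-success, oracle-failure, and bad-estimate cases), then sum the tail probabilities from Theorem~\ref{thm:delta} using $\sum_t t^{-2} \leq \pi^2/6$ and $(1-e^{-x})^{-1} \leq 1 + 1/x$, and finally take the infimum over $\eta$. The constants you describe, including the $3/2$ absorbing the extra $m/t^2$ term for $\kappa>0$, match the paper's derivation exactly.
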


This result is different from the prior results \citep{Chen2016a,Wang2017,Kveton2015c} where $O(\log T)$ gap-dependent regret upper bounds are proven for the CMAB problem. The main difference of our problem from these works is that we assume the minimum ATP to be positive. This allows us to prove the result in Theorem \ref{thm:plays}, by ensuring that each arm is triggered sufficiently many times independent of the exploration strategy used by the learner.

When $\nabla_{\min}$ becomes too small in Theorem \ref{thm:finite}, the regret approaches to infinity because when $\nabla_{\min}$ is too small, $\delta$ will also become very small. For a large class of problems where the bounded smoothess function is $f(x)=\gamma x^{w}$ where $w \in (0,1]$, $\gamma>0$, and $\kappa=1$, the worst case regret is shown to be $O(T^{1-\omega/2}(\ln T)^{\omega/2})$ in \citet{Chen2016a} for a more general setting, hence, this regret bound also holds for our problem. 
We show in Theorem \ref{thm:gap-ind} that for $\kappa=0$ (when CUCB-$\kappa$ becomes the greedy policy), the worst-case regret is bounded by $O(T^{1-w/2})$. 
Note that for the OIM problem and the recommendation problem we consider in the experiments, we have $\omega=1$, and hence, this bound becomes $O(\sqrt{T})$. To prove this, we investigate the behavior of $\text{gap}(S_t)=\alpha r_{\bs{\mu}}^*-	r_{\bs{\mu}}(S_{t})$ based on the change in $\Delta_t^0$. For this, we use Theorem \ref{thm:delta} to bound the expected value of $\Delta_t^0$, which allows us to bound the gap-independent regret.
\begin{thm}\label{thm:gap-ind}
	When the bounded-smoothness function in Assumption 1 is $f(x)=\gamma x^w$ where $\gamma >0$ and $\omega \in (0,1]$, 
	the gap-independent regret bound for CUCB-$0$ is
	\begin{align*}
	Reg_{\bs{\mu},\alpha,\beta}^{\text{CUCB-}0}(T) &\leq \inf_{\eta \in (0,1)} \Bigg(\lceil t' \rceil \nabla_{\text{max}} +  \gamma(2m)^\omega\Bigg[ 2^\omega\Big(\frac{\pi}{2\eta p^*}\Big)^{\omega/2} +3^\omega\Bigg] \frac{T^{1-w/2}}{1-w/2}\Bigg)
	\end{align*}
	where $t' := 4c^2/e^2$ and $c := 1/(p^*(1-\eta))^2$. Hence, the gap-independent regret of CUCB-$0$ is $O(T^{1-\frac{w}{2}})$.
\end{thm}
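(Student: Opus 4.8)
The plan is to bound the $(\alpha,\beta)$-regret by a sum of per-epoch penalties, each controlled by the estimation error $\Delta_t^0 = \max_i |\mu_i - \overline{\mu}_i^{0,t}|$ (recall $\overline{\mu}_i^{0,t}=\hat{\mu}_i^t$ since the inflation term vanishes at $\kappa=0$), and then to use Theorem \ref{thm:delta} to show the expected penalty at epoch $t$ decays like $t^{-\omega/2}$, so that summation yields the $T^{1-\omega/2}$ rate. Concretely, I would write $\text{Reg}(T)=\sum_{t=1}^T\big(\alpha\beta r_{\bs{\mu}}^* - \mathbb{E}[r_{\bs{\mu}}(S_t)]\big)$ and bound each summand conditionally on the history $\mathcal{F}_{t-1}$, with respect to which $\hat{\bs{\mu}}^t$ and hence $\Delta_t^0$ are measurable. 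Since CUCB-$0$ feeds $\hat{\bs{\mu}}^t$ to the oracle, the $(\alpha,\beta)$-guarantee gives $r_{\hat{\bs{\mu}}^t}(S_t)\geq\alpha\max_S r_{\hat{\bs{\mu}}^t}(S)$ with probability at least $\beta$; combining this with Assumption 1 applied to the pair $(\bs{\mu},\hat{\bs{\mu}}^t)$ at both $S^*$ and $S_t$, and using $r_{\hat{\bs{\mu}}^t}(S_t)\geq 0$ on the complementary event, yields $\alpha\beta r_{\bs{\mu}}^* - \mathbb{E}[r_{\bs{\mu}}(S_t)\mid\mathcal{F}_{t-1}]\leq(\alpha\beta+1)f(\Delta_t^0)\leq 2f(\Delta_t^0)$. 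Taking expectations gives the per-epoch bound $\alpha\beta r_{\bs{\mu}}^* - \mathbb{E}[r_{\bs{\mu}}(S_t)]\leq 2\gamma\,\mathbb{E}[(\Delta_t^0)^\omega]$; the $\alpha\beta$-normalization in the regret definition is precisely what lets the oracle-failure event be absorbed without leaving a term linear in $T$.

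Next, since $x\mapsto x^\omega$ is concave for $\omega\in(0,1]$, Jensen's inequality gives $\mathbb{E}[(\Delta_t^0)^\omega]\leq(\mathbb{E}[\Delta_t^0])^\omega$, which is the step that produces clean $\sqrt{\pi}$-type constants rather than Gamma-function constants. I would then bound $\mathbb{E}[\Delta_t^0]=\int_0^1 \Pr(\Delta_t^0 > u)\,du$ using the CUCB-$0$ tail bound of Theorem \ref{thm:delta}, valid for $t\geq t'$. The sub-Gaussian term integrates cleanly, $\int_0^\infty 2m\,e^{-2u^2\eta p^* t}\,du = m\sqrt{\pi/(2\eta p^* t)}$, which (after raising to the power $\omega$) is the source of the $2^\omega(\pi/(2\eta p^*))^{\omega/2}$ factor. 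The polynomial term $2m/(t^2(1-e^{-2u^2}))$ is singular at $u=0$, so I would split the integration range at a cutoff of order $\sqrt{m}/t$, bounding the probability by $1$ below the cutoff and integrating the tail above it; this contributes a lower-order $O(\sqrt{m}/t)$ piece that is absorbed (crudely, via $t^{-1}\leq t^{-1/2}$) into the $3^\omega$ constant. The net per-epoch bound is $2\gamma(\mathbb{E}[\Delta_t^0])^\omega\leq \gamma(2m)^\omega\big[2^\omega(\pi/(2\eta p^*))^{\omega/2}+3^\omega\big]\,t^{-\omega/2}$, the stated constants being a clean (slightly loose) envelope of the tight expression.

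Finally, I would treat the initial epochs trivially via $\text{gap}(S_t)\leq\nabla_{\max}$, accounting for the $\lceil t'\rceil\nabla_{\max}$ term, and for $t>\lceil t'\rceil$ sum the per-epoch bound using the integral comparison $\sum_{t=1}^T t^{-\omega/2}\leq\int_0^T x^{-\omega/2}\,dx = T^{1-\omega/2}/(1-\omega/2)$, which reproduces exactly the $T^{1-\omega/2}/(1-\omega/2)$ factor in the statement. Taking the infimum over $\eta\in(0,1)$ then gives the bound, and since $t'$ and all constants are independent of $T$, the $O(T^{1-\omega/2})$ conclusion is immediate.

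I expect the main obstacle to be the clean extraction of an $O(t^{-1/2})$ bound for $\mathbb{E}[\Delta_t^0]$ from Theorem \ref{thm:delta}. The exponential term must be integrated as a Gaussian, which is exactly what yields the \emph{log-free} $t^{-1/2}$ rate that distinguishes CUCB-$0$ from CUCB-$\kappa$ with $\kappa>0$ (whose confidence radius carries an extra $\ln t$); meanwhile the singular polynomial term must be tamed by a cutoff so that it neither diverges at the origin nor contributes an extra $\ln t$ or a faster-than-needed rate. Equally delicate is verifying that the $\alpha\beta$-normalization cancels the oracle-failure contribution so that the concavity reduction is legitimate and no linear-in-$T$ term survives.
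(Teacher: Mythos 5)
Your proposal is correct and follows essentially the same route as the paper: bound the instantaneous regret by $2f(\Delta_t^0)$ via the oracle guarantee and Assumption 1 (the paper's Lemmas \ref{lemma:gap} and \ref{lemma:al}), use concavity/Jensen to reduce to $\mathbb{E}[\Delta_t^0]$, integrate the Theorem \ref{thm:delta} tail bound with a Gaussian integral for the exponential term and a cutoff for the singular term to get $\mathbb{E}[\Delta_t^0]=O(t^{-1/2})$ (the paper's Lemma \ref{UCBplusSample}, with cutoff $t^{-1}$ rather than your $\sqrt{m}/t$), and then sum $t^{-\omega/2}$ and take the infimum over $\eta$. The minor differences (where you place the cutoff, integrating the Gaussian over the half-line versus the whole line) only affect constants and do not change the argument.
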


As a remark, note that the gap-independent regret bound holds for all problem instances where the minimum ATP is at least $p^*$. Essentially, gap-independent means that the regret bound does not depend on $\nabla_{\min}$, and hence $\delta$. Also, $\nabla_{\max}$ can be bounded by the maximum reward, since the reward is assumed to be finite.

\subsection{Regret Analysis for CTS}\label{sec:CTSregret}

	The analysis of the regret of CTS is similar to the regret analysis for CUCB-$\kappa$. We first show that the probability that $\Delta_{t}^{\bs{\nu}}:=\max_i |\nu_i^{t} - \mu_i|$ is greater than  some constant $\delta>0$ becomes smaller as $t$ increases.
	\begin{thm}  \label{thm:deltathompson}
		When CTS is run, for any $\delta>0$ and $\eta \in (0,1)$, we have for all integers $t \geq t' := 4c^2 / e^2$ 
		\begin{align*}
		\Pr(\Delta_{t+1}^{\bs{\nu}} \geq \delta)
		&\leq (3+e^{2\delta})m \Big(\frac{1}{t^2(1-e^{-\delta^2/2})}+e^{-\delta^2\eta p^* t/2}\Big)
		\end{align*} 
		where $c=1/(p^*(1-\eta))^2$.
	\end{thm}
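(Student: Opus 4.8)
The plan is to mirror the proof of Theorem~\ref{thm:delta} for CUCB-$\kappa$, replacing the UCB estimate by the Thompson sample $\nu_i^{t+1}$ and inserting one additional layer of randomness coming from the posterior draw. First I would apply a union bound over the $m$ arms, reducing the claim to a per-arm bound on $\Pr(|\nu_i^{t+1} - \mu_i| \geq \delta)$, which accounts for the factor $m$. I would then split each per-arm event into an upper tail $\{\nu_i^{t+1} \geq \mu_i + \delta\}$ and a lower tail $\{\nu_i^{t+1} \leq \mu_i - \delta\}$ and treat them separately.

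For a fixed arm, let $N := T_i^{t+1}$ denote the (random) number of observations and let $\hat{\mu}_i^{t+1} = s_i^{t+1}/N$ be the empirical mean, so that $\nu_i^{t+1} \sim \text{Beta}(s_i^{t+1}+1, f_i^{t+1}+1)$. The key idea is to decouple the two sources of error via the triangle inequality: on each tail I would insert $\hat{\mu}_i^{t+1}$ and split $\delta$ into the deviation of the empirical mean from $\mu_i$ (a data-concentration event) and the deviation of the posterior draw from the empirical mean (a sampling-concentration event). The data-concentration part is controlled exactly as in Theorem~\ref{thm:delta} by Hoeffding's inequality conditioned on $N$, using that the observed states of arm $i$ are unbiased samples of $\mu_i$. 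For the sampling-concentration part I would invoke the Beta--Binomial identity $\Pr(\text{Beta}(s+1,f+1) > y) = \Pr(\text{Bin}(N+1, y) \leq s)$ to rewrite the Beta tail as a Binomial tail, and then apply a Chernoff/Hoeffding bound on the Binomial. Carrying out this computation, together with its symmetric lower-tail counterpart, produces concentration of the form $e^{-N\delta^2/2}$, and the bookkeeping of the upper- and lower-tail contributions, in particular the shift from $N$ to $N+1$ trials in the Beta--Binomial identity, is what collects into the constant $(3 + e^{2\delta})$.

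To convert the per-$N$ bounds into the stated bound I would handle the random observation count exactly as in the CUCB analysis. I would partition on whether $N = T_i^{t+1}$ is small or large relative to $\eta p^* t$. In the low-observation regime $\{T_i^{t+1} \leq \eta p^* t\}$ I would simply bound the probability of the deviation event by $\Pr(T_i^{t+1} \leq \eta p^* t)$ and apply the per-arm consequence of Theorem~\ref{thm:plays}, valid since $t \geq t' = 4c^2/e^2$, which yields the $1/t^2$ contribution. In the high-observation regime $\{T_i^{t+1} > \eta p^* t\}$ I would sum the per-$N$ bound $e^{-N\delta^2/2}$ over $N > \eta p^* t$ as a geometric series, giving $e^{-\delta^2 \eta p^* t/2}/(1 - e^{-\delta^2/2})$. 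Multiplying the combined per-arm bound by $m$ and collecting the $(3+e^{2\delta})$ constant then gives the stated inequality.

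The main obstacle is the sampling-concentration step: bounding the probability that the Thompson draw $\nu_i^{t+1}$ lands far from the empirical mean, uniformly over the realized counts. Unlike the UCB estimate, which is a deterministic function of the observations, $\nu_i^{t+1}$ carries its own randomness, so the Beta--Binomial identity together with a careful Chernoff estimate are needed to recover an exponent matching the $\delta^2/2$ rate of Theorem~\ref{thm:delta}; tracking the $+1/+2$ shifts in the Beta parameters to obtain the clean constant $(3+e^{2\delta})$ is the delicate bookkeeping. A secondary technical point, inherited from the CUCB analysis, is justifying the Hoeffding bound conditioned on the random, algorithm-dependent count $T_i^{t+1}$, which I would address by coupling with the first $N$ i.i.d.\ observations of each arm as in the proof of Theorem~\ref{thm:plays}.
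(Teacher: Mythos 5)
Your proposal follows essentially the same route as the paper's proof: a triangle-inequality split of $\delta$ between the empirical-mean error $|\hat{\mu}_i^{t+1}-\mu_i|$ (handled by the conditional Hoeffding argument of Lemma~\ref{lemma:sample}) and the posterior-sampling error $|\nu_i^{t+1}-\hat{\mu}_i^{t+1}|$ (handled via the Beta--Binomial identity and the Binomial Chernoff bounds, yielding the $(1+e^{2\delta})e^{-\delta^2 j/2}$ per-count bound), followed by the same partition of $T_i^{t+1}$ around $\eta p^* t$ using Lemma~\ref{lemma:plays2} and a union bound over arms. The only cosmetic difference is where the geometric series is placed --- the paper sums $e^{-\delta^2 j/2}/t^2$ over the low-count regime and uses the crude bound $e^{-\delta^2 \lceil t^* \rceil/2}$ in the high-count regime, which is what produces the exact constants stated --- but this does not change the argument.
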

	\cem{The theorem above does not have the extra $m/t^2$ term that Theorem 2 has.}

	The next theorem proves that the regret of CTS is bounded for any $T>0$.
	\begin{thm} \label{thm:finite2}
		The regret of CTS is bounded, i.e., $\forall T \geq 1$
		\begin{align*}
		&\text{Reg}^{\text{CTS}}_{\bs{\mu},\alpha,\beta}(T) \\
		&\leq \nabla_{\max} 
		\inf_{\eta \in (0,1)} \left(\lceil t' \rceil + \frac{(3+e^{2\delta})m\pi^2}{6} \left(1+\frac{2}{\delta^2}\right) + (3+e^{2\delta}) m \left(1+\frac{2}{\delta^2\eta p^*}\right) \right).
		\end{align*}
		where $\delta := f^{-1}(\nabla_{\min}/2 )$,  $t' := 4c^2/e^2$ and $c := 1/(p^*(1-\eta))^2$.
	\end{thm}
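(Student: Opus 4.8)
The plan is to mirror the regret decomposition used for CUCB-$\kappa$ in Theorem~\ref{thm:finite}, replacing the confidence-bound deviation $\Delta_t^\kappa$ by the posterior-sampling deviation $\Delta_t^{\bs{\nu}}$ and invoking Theorem~\ref{thm:deltathompson} in place of Theorem~\ref{thm:delta}. Writing the regret as $\sum_{t=1}^T\big(\alpha\beta r^{*}_{\bs{\mu}}-\mathbb{E}[r_{\bs{\mu}}(S_t)]\big)$, I would fix $\delta:=f^{-1}(\nabla_{\min}/2)$ and split each summand according to the ``good estimate'' event $G_t:=\{\Delta_t^{\bs{\nu}}<\delta\}$ and its complement. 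On $G_t^c$ I bound the per-epoch regret crudely by $\nabla_{\max}$, since $\text{gap}(S_t)\le\nabla_{\max}$ always and $\alpha\beta r^{*}_{\bs{\mu}}\le \alpha r^{*}_{\bs{\mu}}$ while $r_{\bs{\mu}}(S_t)\ge 0$. The goal is then to show the contribution of $G_t$ vanishes, so that the whole per-epoch regret is at most $\nabla_{\max}\Pr(\Delta_t^{\bs{\nu}}\ge\delta)$.

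The crux is the $G_t$ term, and here the $(\alpha,\beta)$-oracle must be handled with care. Conditioning on the sampled vector $\bs{\nu}^t$ with $\Delta_t^{\bs{\nu}}<\delta$, Assumption~1 gives $|r_{\bs{\nu}}(S)-r_{\bs{\mu}}(S)|\le f(\Delta_t^{\bs{\nu}})<f(\delta)=\nabla_{\min}/2$ for every $S$. The key sub-claim is that, whenever the oracle meets its approximation guarantee (i.e.\ $r_{\bs{\nu}}(S_t)\ge\alpha r^{*}_{\bs{\nu}}$, an event of conditional probability at least $\beta$), the chain $r_{\bs{\mu}}(S_t)\ge r_{\bs{\nu}}(S_t)-\nabla_{\min}/2\ge\alpha r^{*}_{\bs{\nu}}-\nabla_{\min}/2\ge\alpha(r^{*}_{\bs{\mu}}-\nabla_{\min}/2)-\nabla_{\min}/2\ge\alpha r^{*}_{\bs{\mu}}-\nabla_{\min}$ forces $\text{gap}(S_t)<\nabla_{\min}$, so $S_t$ cannot be a bad action and hence $r_{\bs{\mu}}(S_t)\ge\alpha r^{*}_{\bs{\mu}}$. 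Combining this with non-negativity of the reward on the complementary (oracle-failure) event, and using that the conditional oracle-success probability is at least $\beta$, yields $\mathbb{E}[r_{\bs{\mu}}(S_t)\mid\bs{\nu}^t]\ge\beta\alpha r^{*}_{\bs{\mu}}$ on $G_t$. Since $G_t$ is $\bs{\nu}^t$-measurable, it follows that $\mathbb{E}[(\alpha\beta r^{*}_{\bs{\mu}}-r_{\bs{\mu}}(S_t))\mathbbm{1}_{G_t}]\le 0$, and the per-epoch regret is at most $\nabla_{\max}\Pr(\Delta_t^{\bs{\nu}}\ge\delta)$.

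It remains to sum this bound over $t=1,\dots,T$. I would bound the first $\lceil t'\rceil$ terms trivially by $1$ each, contributing $\nabla_{\max}\lceil t'\rceil$, and apply Theorem~\ref{thm:deltathompson} to the tail (with the theorem's index set to $t-1\ge t'$). Summing the two resulting series with $\sum_{s\ge1}s^{-2}\le\pi^2/6$ and the elementary inequality $1/(1-e^{-y})\le 1+1/y$ (applied with $y=\delta^2/2$ and $y=\delta^2\eta p^*/2$) produces exactly the $\tfrac{(3+e^{2\delta})m\pi^2}{6}(1+\tfrac{2}{\delta^2})$ and $(3+e^{2\delta})m(1+\tfrac{2}{\delta^2\eta p^*})$ terms; taking the infimum over $\eta\in(0,1)$ gives the stated bound.

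I expect the only genuinely delicate step to be the conditional argument of the second paragraph: one must keep the inequality $\Delta_t^{\bs{\nu}}<\delta$ strict so that $\text{gap}(S_t)<\nabla_{\min}$ rules out every bad action (whose gaps are all at least $\nabla_{\min}$), and one must correctly carry the factor $\beta$ through the oracle's randomness rather than treating the selected action as deterministic. The summation and the passage from Theorem~\ref{thm:deltathompson} are then mechanical and identical in spirit to the CUCB-$\kappa$ argument; note in particular that the extra $m/t^2$ term present for CUCB-$\kappa$ with $\kappa>0$ is absent here, which is why the additive factor $3/2$ seen in \eqref{eqn:bound1} does not appear in this bound.
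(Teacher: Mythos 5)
Your proposal is correct and follows essentially the same route as the paper: the same decomposition into the first $\lceil t'\rceil$ epochs plus a tail split on the event $\{\Delta_t^{\bs{\nu}}<\delta\}$, with your second paragraph simply re-deriving inline what the paper packages as Lemma~\ref{lemma:gap} (including the careful handling of the oracle's success probability $\beta$), and the tail summed via Theorem~\ref{thm:deltathompson} and $1/(1-e^{-x})\leq 1+1/x$ exactly as in the paper. Your closing remark about the absent $m/t^2$ term relative to the CUCB-$\kappa$ bound is also accurate.
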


The next theorem gives the gap-independent regret bound for CTS.

\begin{thm}\label{thm:gap-ind2}
	When the bounded-smoothness function in Assumption 1 is $f(x)=\gamma x^w$ where $\gamma >0$ and $\omega \in (0,1]$, the gap-independent regret bound for CTS is 
	\begin{align*}
	\text{Reg}_{\bs{\mu},\alpha,\beta}^{\text{CTS}}(T) &\leq \inf_{\eta \in (0,1)} \Bigg(\lceil t' \rceil \nabla_{\max} +   \gamma\Big(2m(3+e^2)\Big)^\omega\Bigg[ \Big(\frac{2\pi}{\eta p^*}\Big)^{\omega/2} +3^\omega \Bigg]\frac{T^{1-w/2}}{1-w/2}\Bigg)
	\end{align*}
	where $t' := 4c^2/e^2$ and $c := 1/(p^*(1-\eta))^2$. Hence, the gap-independent regret of CTS is $O(T^{1-\frac{w}{2}})$.
\end{thm}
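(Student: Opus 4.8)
The plan is to mirror the proof of Theorem \ref{thm:gap-ind} (the gap-independent bound for CUCB-$0$), replacing the CUCB-$0$ concentration bound of Theorem \ref{thm:delta} with the CTS concentration bound of Theorem \ref{thm:deltathompson}. I first split the horizon: for the epochs $t$ that are too small for Theorem \ref{thm:deltathompson} to apply (there are at most $\lceil t' \rceil$ of them) I bound the per-epoch contribution crudely by $\nabla_{\max}$, since $\text{gap}(S_t) \le \nabla_{\max}$ always, which yields the leading $\lceil t' \rceil \nabla_{\max}$ term. The remaining work is to control $\sum_t \big(\alpha\beta r^*_{\bs{\mu}} - \mathbb{E}[r_{\bs{\mu}}(S_t)]\big)$ over the later epochs.

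\emph{Per-epoch reduction.} For a fixed epoch $t$, let $G_t$ be the event that the oracle succeeds on its input $\bs{\nu}^t$, i.e. $r_{\bs{\nu}^t}(S_t) \ge \alpha r^*_{\bs{\nu}^t}$; by the $(\alpha,\beta)$-property $\Pr(G_t \mid \bs{\nu}^t) \ge \beta$. Because the benchmark carries the factor $\alpha\beta r^*_{\bs{\mu}}$ and the reward is non-negative, a short computation gives $\alpha\beta r^*_{\bs{\mu}} - \mathbb{E}[r_{\bs{\mu}}(S_t)] \le \mathbb{E}[\text{gap}(S_t)\,\mathbbm{1}\{G_t\}]$. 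On $G_t$ I use Assumption 2 (monotonicity) and Assumption 1 (smoothness with $f(x)=\gamma x^\omega$) together with $\max_i |\nu_i^t - \mu_i| = \Delta_t^{\bs{\nu}}$ to get $\text{gap}(S_t) \le 2f(\Delta_t^{\bs{\nu}}) = 2\gamma(\Delta_t^{\bs{\nu}})^\omega$ in the usual way, namely $r_{\bs{\mu}}(S_t) \ge r_{\bs{\nu}^t}(S_t) - f(\Delta_t^{\bs{\nu}}) \ge \alpha r^*_{\bs{\nu}^t} - f(\Delta_t^{\bs{\nu}}) \ge \alpha r^*_{\bs{\mu}} - 2f(\Delta_t^{\bs{\nu}})$. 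Finally, since $\omega \in (0,1]$ makes $f$ concave, Jensen's inequality gives the clean per-epoch bound $\alpha\beta r^*_{\bs{\mu}} - \mathbb{E}[r_{\bs{\mu}}(S_t)] \le 2\gamma\,\mathbb{E}[(\Delta_t^{\bs{\nu}})^\omega] \le 2\gamma\,(\mathbb{E}[\Delta_t^{\bs{\nu}}])^\omega$; passing to the first moment here is exactly what turns the linear prefactors of Theorem \ref{thm:deltathompson} into the $\big(2m(3+e^2)\big)^\omega$ factor in the statement.

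\emph{Bounding the first moment.} Next I bound $\mathbb{E}[\Delta_t^{\bs{\nu}}] = \int_0^1 \Pr(\Delta_t^{\bs{\nu}} > \delta)\,d\delta$ with Theorem \ref{thm:deltathompson}, using $e^{2\delta} \le e^2$ for $\delta \in (0,1]$ and splitting the tail into its exponential piece and its $1/t^2$ piece. The exponential piece integrates as a Gaussian, $\int_0^\infty e^{-\delta^2 \eta p^* t/2}\,d\delta = \sqrt{\pi/(2\eta p^* t)}$, contributing a term of order $(3+e^2)m\,t^{-1/2}$ with no logarithmic factor; this is precisely where the improvement over the $\tilde{O}(\sqrt{T})$ bounds of \citet{Chen2016a,Wang2017} comes from, since Theorem \ref{thm:deltathompson} (via Theorem \ref{thm:plays}) decays exponentially in $t$ rather than needing a $\sqrt{\ln t}$ confidence width. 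The $1/t^2$ piece has a $1/(1-e^{-\delta^2/2})$ singularity at $0$, so there I cap the probability at $1$ and use $1-e^{-\delta^2/2}\gtrsim \delta^2$; this contributes a term of order $t^{-1}$, which I over-bound by $t^{-1/2}$ to match the common form and which supplies the $3^\omega$ summand. Combining, $\mathbb{E}[\Delta_t^{\bs{\nu}}] \le 2m(3+e^2)\big[\sqrt{2\pi/(\eta p^*)} + 3\big] t^{-1/2}$.

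\emph{Assembling, and the obstacle.} Raising this to the power $\omega$ with subadditivity $(a+b)^\omega \le a^\omega + b^\omega$ gives $(\mathbb{E}[\Delta_t^{\bs{\nu}}])^\omega \le (2m(3+e^2))^\omega\big[(2\pi/(\eta p^*))^{\omega/2} + 3^\omega\big] t^{-\omega/2}$; summing over $t$ via $\sum_{t \le T} t^{-\omega/2} \le \int_0^T x^{-\omega/2}\,dx = T^{1-\omega/2}/(1-\omega/2)$, adding the $\lceil t'\rceil \nabla_{\max}$ from the initial epochs, and taking the infimum over $\eta$ yields the stated bound, from which the $O(T^{1-\omega/2})$ claim is immediate. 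I expect the main obstacle to be the tail integration: keeping the exponential piece log-free while extracting explicit constants, and carefully handling the singularity and the probability capping in the $1/t^2$ piece so that it is genuinely of lower (or at worst matching) order. Everything else is bookkeeping that parallels Theorem \ref{thm:gap-ind} with the CTS constants of Theorem \ref{thm:deltathompson} in place of the CUCB-$0$ constants of Theorem \ref{thm:delta}.
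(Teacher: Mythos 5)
Your proposal follows essentially the same route as the paper's own proof of this theorem: the same split of the horizon at $\lceil t'\rceil$, the same reduction of the instantaneous regret to $2\,\mathbb{E}[f(\Delta_t^{\bs{\nu}})]$ followed by Jensen's inequality, the same tail-integral bound on $\mathbb{E}[\Delta_t^{\bs{\nu}}]$ from Theorem \ref{thm:deltathompson} (Gaussian integration for the exponential piece and $1-e^{-\delta^2/2}\gtrsim\delta^2$ for the $1/t^2$ piece), and the same subadditivity-plus-summation finish with the infimum over $\eta$. The only deviations are in constant bookkeeping (your ordering of the power-$\omega$ step carries a factor of at most $2^{1-\omega}$ relative to the displayed constant), which does not affect the $O(T^{1-\omega/2})$ conclusion.
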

Similar to the previous case, for the OIM problem, the bound in Theorem \ref{thm:gap-ind2} becomes $O(\sqrt{T})$.

\section{Illustrative Results}\label{sec:illustrative}
In this section, we evaluate the performance of CUCB-$\kappa$ and CTS on a recommendation problem. This problem has become popular among researchers with the popularization of on-demand media streaming services like Netflix. We use the \textit{MovieLens} dataset for our experiments. The dataset contains 138k people who assigned 20M ratings to 27k movies between January 1995 and March 2015. We use the portion of the dataset that was collected between March 2014 and March 2015, which consists of 750k ratings. For our experiments, we choose 200 movies in total among the movies that were rated more than 200 times: 50 movies with the smallest ratings, 50 movies with the highest ratings, and 100 movies randomly.

\subsection{Definition of the Recommendation Problem}

The problem consists of a weighted bipartite graph $G=(L,R,E,p)$ where $L$ denotes the set of movies, $R$ denotes the set of users,\footnote{Each user corresponds to a pool of individuals with same type of preferences over genres.} $E$ denotes the set of edges between the users, and $p = \{ p_{i,j} \}_{(i,j) \in E}$, where $p_{i,j}$ is the weight of edge $(i,j)$, which corresponds to the probability that movie $i$ influences (attracts) user $j$. 
The goal of the learner is to find a set $S \subseteq L$ of size $k$ that maximizes the expected number of attracted nodes in $R$. 
This problem is an instance of the probabilistic maximum coverage problem \citep{Chen2016a}. Our problem extends this problem by allowing the nodes in $S$ to trigger any $(i,j) \in E $ probabilistically. For instance, this can happen if the users also interact with each other in a social network, where the recommendation made to a user in the network may influence other users into watching the recommended movie via the word of mouth effect. Moreover, both the triggering and influence probabilities are initially unknown. 
We let $p_S^{i,j}$ denote the probability that action $S$ triggers edge $(i,j) \in E $. 
The expected reward is defined as the expected total number of users that are attracted by at least one movie, and is given as
$r_{G}(S) = \sum_{j \in R} (1-\prod_{(i,j) \in E} (1-p_S^{i,j} p_{i,j}) )$.
We assume that $p_S^{i,j}=1$ for the outgoing edges of nodes $i \in S$. This assumption merely says that user $j$ will watch movie $i$ with probability $p_{i,j}$ when the movie is recommended to the user by the system. For the nodes $i \notin S$, $p_S^{i,j} < 1$. For these nodes, $p_S^{i,j}$ denotes the probability that user $i$ gets to know about movie $j$ by the word of mouth affect, without the recommender system showing the movie to the user. For simulations, we set $p_S^{i,j} = p^*$, and evaluate the effect of different values of $p^*$ on the performance of CUCB-$\kappa$ and CTS.

The above problem can be viewed as an instance of CMAB with PTAs. Each edge $(i,j) \in E$ is an arm and the state of each arm is a Bernoulli random variable with success probability $p_{i,j}$. For this problem, Assumption 1 is satisfied with the bounded-smoothness function $f(x)=|E|x$. The monotonicity assumption is also satisfied for this problem since increasing the $p_{i,j}$'s will definitely increase the expected reward. In addition, the reward function is submodular, and hence, it can be shown that using the greedy algorithm in \citet{Nemhauser1978}, we can achieve $(1-1/e)$-approximation to the optimal reward. Hence, the greedy algorithm can be used as a $(1-1/e,1)$-approximation algorithm.

\subsection{Calculation of the Influence Probabilities}

The MovieLens dataset contains the following attributes for each user: UserId, MovieId, Rating, TimeStamp, Title, and the Genre. Hence, we have the rating each user assigned to a movie with a particular genre and title. The dataset contains 20 genres. For each user $j \in R$ we first calculate the user preference vector $\mathbf{u_j}$, which is a unit vector, where each element of the vector corresponds to a coefficient representing how much the user likes a particular genre. 
We assume that the genre distribution of the movies that the users rated represents their genre preferences. Note that a movie can have multiple genres. We also create a $20$ dimensional vector $\mathbf{g_i}$ for each movie $i$, and let $\mathbf{g_{i}}_k=1$ if a movie belongs to genre $k$ and $0$ otherwise. Using this vector, we calculate the genre preference vector $\mathbf{u_j}=\frac{\sum_{i\in L} \mathbf{g_i}+\epsilon_{i,j}}{|| \sum_{i\in L} \mathbf{g_i}+\epsilon_{i,j}||}$ for each user $j\in R$, where $\epsilon_{i,j} \sim \text{Half-Normal}(\sigma=0.05)$. The role of $\epsilon_{i,j}$ here is to account for the fact that the user may possibly explore new genres.
Similarly, for each movie $i \in L$, we calculate the unit movie genre vector $\mathbf{m_{i}}$ as $\mathbf{g_i}/||\mathbf{g_i}||$. Using these, the influence probabilities are calculated as $p_{i,j}= sc \times \frac{<\mathbf{m_{i}},\mathbf{u_j}>r_i}{\max{r_i}}$, $(i,j) \in E$, where $r_i$ is the average rating given by all users to the movie $i$ and $sc$ is a scale factor in $(0,1]$. This way, we took into account the quality in addition to the type (genre) of the movies in determining the influence probabilities.

\subsection{Results}

\begin{figure*}[h!]
	\begin{subfigure}{\textwidth}
		\begin{adjustbox}{width=\textwidth}
			\begin{subfigure}{0.5\columnwidth}
				\centering
				\includegraphics[width=\columnwidth]{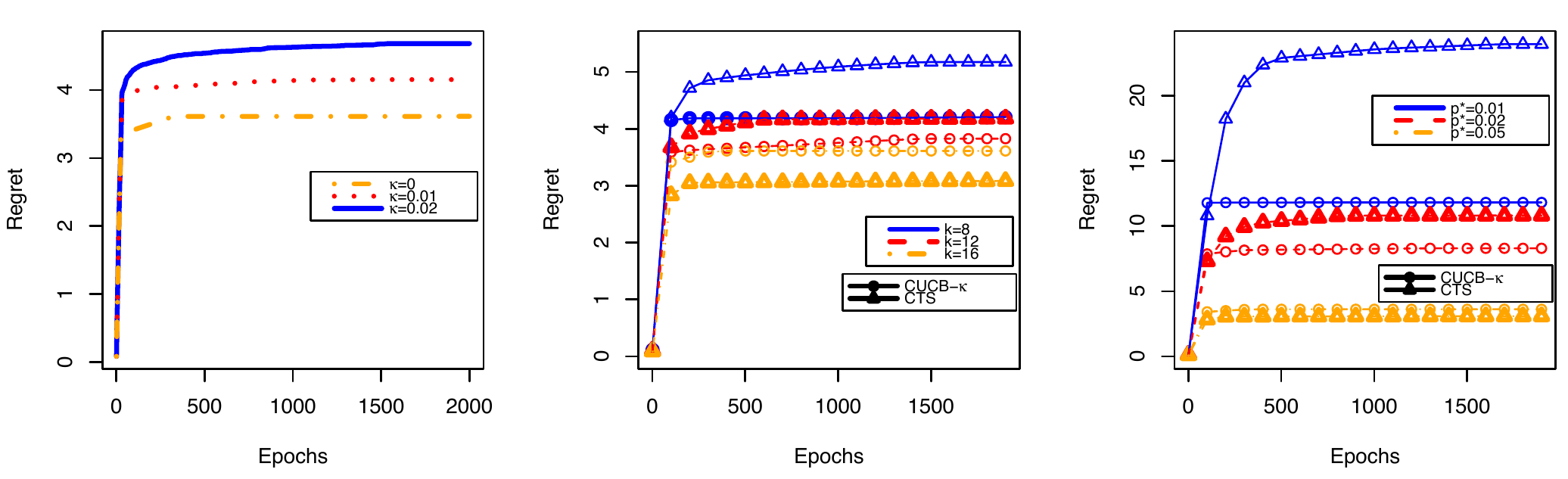}
			\end{subfigure}%
			
		\end{adjustbox}
	\end{subfigure}
	\caption{Regrets of CUCB-$\kappa$ and CTS for different parameter values. Left figure: CUCB-$\kappa$ for $\kappa = 0, 0.01, 0.02$. Middle figure: CUCB-$\kappa$ for $\kappa=0$ (circle marker) and CTS (triangle marker). Right figure: CUCB-$\kappa$ for $\kappa=0$ (circle marker) and CTS (triangle marker).
	}
	\label{CUCB}
\end{figure*} 

All of the presented results are for $p^*=0.05$, $k=16$, $sc=0.2$, and $\kappa=0$ unless otherwise stated. In addition, to be able to make plausible comparisons between settings with different parameters, we consider a scaled version of the regret, where the regret is divided by the $\alpha\beta$ fraction of the optimal reward. $\alpha\beta$ fraction of the optimal reward is calculated by running the $(\alpha,\beta)$-approximation algorithm, which is the greedy algorithm from \citet{Nemhauser1978}, by giving the true influence probabilities as input. We observe from Fig. \ref{CUCB} that the regret is bounded for different values of $k$ and $p^*$ for both of the algorithms and that the regret is bounded for different values of $\kappa$ for CUCB-$\kappa$. It is observed that both CUCB-$\kappa$ and CTS incurs almost no regret after the first 300 epochs. Moreover, for both of the algorithms, as $p^*$ or $k$ increases, the regret becomes smaller. On the other hand, for CUCB-$\kappa$, the regret becomes larger as $\kappa$ increases, which shows that exploration hurts rather than it helps in this setting.
Another observation is that although the regret of CTS and CUCB-$0$ increases as $p^*$ or $k$ decreases, CTS is affected more than CUCB-$0$ by the changes in $p^*$ or $k$. 
\section{Conclusion}\label{sec:conclusion}

In this paper we consider the CMAB problem with positive ATPs, and prove that CUCB-$\kappa$ and CTS achieve bounded gap-dependent regret for any number of epochs $T$. In addition, we prove that CTS and CUCB-$0$ incur at most $O(\sqrt{T})$ gap-independent regret. We also show numerically that CUCB-$\kappa$  and CTS achieve bounded regret in a real-world movie recommendation problem. These results suggest that exploration strategies may not be necessary for learning algorithms that work in CMAB with PTAs where ATPs are positive.

\section{Appendix}

\subsection{Preliminaries}

First, we define the instantaneous $(\alpha,\beta)$-approximation regret, which will be used throughout the analysis.
\begin{defn}
	The instantaneous $(\alpha,\beta)$-approximation regret of algorithm $\pi$ at epoch $t$ is given as
	\begin{align*}
	\text{IR}_{\bs{\mu},\alpha,\beta}^\pi(t)= \alpha\beta r_{\bs{\mu}}^* - r_{\bs{\mu}}(S_t) .
	\end{align*}	
\end{defn}
\cem{I changed this definition to make the total regret equal to sum of the instantaneous regrets. This does not cause any problem, right?}

Next, we define the {\em nice event} at epoch $t$, denoted by
${\cal N}_t$.
\begin{defn}
	${\cal N}_t$ is the event when the $(\alpha,\beta)$-approximation algorithm yields a reward greater than or equal to $\alpha\beta r_{\bs{\mu}}^*$ in expectation, i.e., $\mathbb{E}[r_{\bs{\mu}}(S_t)|{\cal N}_t]\geq \alpha\beta r_{\bs{\mu}}^*$ .
\end{defn}

Let $\hat{\mu}^{(\alpha,\beta),t}_i$ be the estimate of the expected state of arm $i$ given to the $(\alpha,\beta)$-approximation algorithm in epoch $t$ and $\hat{\bs{\mu}}^{(\alpha,\beta),t} := (\hat{\mu}^{(\alpha,\beta),t}_1, \ldots, \hat{\mu}^{(\alpha,\beta),t}_m)$ . For instance, for CUCB-$\kappa$ and CTS, 
$\hat{\mu}^{(\alpha,\beta),t}_i = \bar{\mu}^{\kappa,t}_i$ and 
$\hat{\mu}^{(\alpha,\beta),t}_i = {\nu}^{\kappa,t}_i$, respectively.
Let $\Delta_t^{(\alpha,\beta)}:= \max_{i \in \{1,\ldots,m\}} |\mu_i - \hat{\mu}^{(\alpha,\beta),t}_i|$. The next lemma provides a connection between $\Delta_t^{(\alpha,\beta)}$ and the performance of the $(\alpha,\beta)$-approximation algorithm.
\begin{lemma} \label{lemma:gap}
Given $\Delta_t^{(\alpha,\beta)}<f^{-1}(\nabla_{\min}/2 )$, an $(\alpha,\beta)$-approximation algorithm will select an action for which $\text{gap}(S_t) \leq 0$, with probability at least $\beta$.
\end{lemma}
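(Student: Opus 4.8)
The plan is to transfer the hypothesis that the \emph{input} vector $\hat{\bs{\mu}} := \hat{\bs{\mu}}^{(\alpha,\beta),t}$ is close to the true vector $\bs{\mu}$ into a uniform bound on the discrepancy between the two reward functions $r_{\hat{\bs{\mu}}}(\cdot)$ and $r_{\bs{\mu}}(\cdot)$, and then combine this with the defining guarantee of the $(\alpha,\beta)$-approximation algorithm. Write $\delta := f^{-1}(\nabla_{\min}/2)$, so the hypothesis reads $\Delta_t^{(\alpha,\beta)} = \max_i |\mu_i - \hat{\mu}_i| < \delta$. First I would apply Assumption 1 with $\Delta = \Delta_t^{(\alpha,\beta)}$: since $f$ is strictly increasing with $f(0)=0$, this yields, for every action $S \in {\cal S}$, the strict bound $|r_{\bs{\mu}}(S) - r_{\hat{\bs{\mu}}}(S)| \le f(\Delta_t^{(\alpha,\beta)}) < f(\delta) = \nabla_{\min}/2$ (the degenerate case $\Delta_t^{(\alpha,\beta)}=0$ gives $r_{\bs{\mu}} \equiv r_{\hat{\bs{\mu}}}$ and is handled directly). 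Keeping this inequality strict is what will let me rule out bad actions at the end, so I would be careful here.

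Next I would invoke the approximation guarantee: with probability at least $\beta$ (over the internal randomness of the approximation algorithm given the fixed input $\hat{\bs{\mu}}$), the returned action $S_t$ satisfies $r_{\hat{\bs{\mu}}}(S_t) \ge \alpha\, r_{\hat{\bs{\mu}}}^*$, where $r_{\hat{\bs{\mu}}}^* = \max_{S} r_{\hat{\bs{\mu}}}(S)$. On this event I would chain the inequalities: from $r_{\bs{\mu}}(S_t) > r_{\hat{\bs{\mu}}}(S_t) - \nabla_{\min}/2 \ge \alpha\, r_{\hat{\bs{\mu}}}^* - \nabla_{\min}/2$, then lower-bounding the estimated optimum by the true optimal action via $r_{\hat{\bs{\mu}}}^* \ge r_{\hat{\bs{\mu}}}(S^*) > r_{\bs{\mu}}(S^*) - \nabla_{\min}/2 = r^*_{\bs{\mu}} - \nabla_{\min}/2$, and finally using $\alpha \le 1$ to absorb the two error terms, I obtain $r_{\bs{\mu}}(S_t) > \alpha r^*_{\bs{\mu}} - \alpha \nabla_{\min}/2 - \nabla_{\min}/2 \ge \alpha r^*_{\bs{\mu}} - \nabla_{\min}$, i.e. $\text{gap}(S_t) < \nabla_{\min}$.

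To finish, I would use the \emph{gap dichotomy} built into the definitions: every action is either good, in which case $\text{gap}(S) \le 0$, or bad, in which case $\text{gap}(S) = \nabla_l \ge \nabla_{n_B} = \nabla_{\min}$ for some $l$, with nothing in between. Since I have shown $\text{gap}(S_t) < \nabla_{\min}$ strictly, $S_t$ cannot be a bad action, hence $\text{gap}(S_t) \le 0$; as this holds on an event of probability at least $\beta$, the claim follows. The only genuinely delicate points are the two already flagged: securing a strict rather than weak inequality $\text{gap}(S_t) < \nabla_{\min}$ (without strictness a bad action whose gap is exactly $\nabla_{\min}$ would survive the argument), and correctly exploiting $\alpha \le 1$ to collapse the two separate $\nabla_{\min}/2$ estimation errors—one from evaluating $S_t$ under the wrong vector, the other from comparing the two optima—into a single $\nabla_{\min}$.
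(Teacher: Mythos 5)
Your proposal is correct and follows essentially the same route as the paper's proof: both apply Assumption 1 twice (once to $S_t$, once to $S^*$), pass through the optimum under the estimated vector, use the $(\alpha,\beta)$-guarantee and $\alpha\le 1$ to obtain $r_{\bs{\mu}}(S_t) > \alpha r^*_{\bs{\mu}} - 2f(\Delta_t^{(\alpha,\beta)}) > \alpha r^*_{\bs{\mu}} - \nabla_{\min}$, and then invoke the good/bad dichotomy to conclude $\text{gap}(S_t)\le 0$. The only cosmetic difference is that you substitute $f(\Delta_t^{(\alpha,\beta)})<\nabla_{\min}/2$ early and argue directly via strictness, whereas the paper keeps $f(\Delta_t^{(\alpha,\beta)})$ symbolic and derives a contradiction at the end.
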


\begin{proof}
	Let $\hat{S}^*_t := \argmax_{S \in {\cal S}} r_{ \hat{\bs{\mu}}^{(\alpha,\beta),t} }(S)$ be the optimal action given the  estimated expectation vector $\hat{\bs{\mu}}^{(\alpha,\beta),t}$. Using Assumption 1 and the fact that $r_{\hat{\bs{\mu}}^{(\alpha,\beta),t}}(\hat{S}^*_t) \geq r_{\hat{\bs{\mu}}^{(\alpha,\beta),t}}(S^*)$, we get
	\begin{align}
		r_{\bs{\mu}}(S_t) &\geq 	r_{\hat{\bs{\mu}}^{(\alpha,\beta),t}}(S_t)-f(\Delta_t^{(\alpha,\beta)}) \notag \\ 
		&\geq \alpha
			r_{\hat{\bs{\mu}}^{(\alpha,\beta),t}}(\hat{S}^*_t)-f(\Delta_t^{(\alpha,\beta)}) \notag \\ &\geq \alpha
			 r_{\hat{\bs{\mu}}^{(\alpha,\beta),t}}(S^*)-f(\Delta_t^{(\alpha,\beta)}) \notag \\ 
			 &\geq \alpha	r_{\bs{\mu}}^*-2f(\Delta_t^{(\alpha,\beta)}) \label{eqn:lemma1}
	\end{align}
	with probability at least $\beta$.
	Next, we show that \eqref{eqn:lemma1} and 
	$S_t \in \{S_{B,1}, \ldots, S_{B,n_B}\}$ cannot hold at the same time.  
	$S_t \in \{S_{B,1}, \ldots, S_{B,n_B}\}$ and $\{ r_{\bs{\mu}}(S_t) \geq  \alpha r_{\bs{\mu}}^*-2f(\Delta_t^{(\alpha,\beta)}) \}$ implies that 
	\begin{align*}
	\alpha r_{\bs{\mu}}^*-\nabla_{\min} \geq r_{\bs{\mu}}(S_t) \geq \alpha r_{\bs{\mu}}^*-2f(\Delta_t^{(\alpha,\beta)}) .
	\end{align*}
	The above set of inequalities cannot hold when $\Delta_t^{(\alpha,\beta)}<f^{-1}(\nabla_{\min}/2 )$, since in this case we have $\nabla_{\min} > 2f(\Delta_t^{(\alpha,\beta)})$.
	This implies that $S_t \notin \{S_{B,1}, \ldots, S_{B,n_B}\}$ with probability at least $\beta$ when  $\Delta_t^{(\alpha,\beta)}<f^{-1}(\nabla_{\min}/2 )$. 
\end{proof}
\subsection{Proof of Theorem 1}
The following lemma shows that the number of times an arm is triggered increases linearly in $t$ with probability at least $1-1/t^2$.

\begin{lemma} \label{lemma:plays2}
     For any learning algorithm and for any $\eta \in (0,1)$, we have
     \begin{align*}
     \Pr(T_i^{t+1}\leq \eta p^* t)\leq \frac{1}{t^2} 
     \end{align*}
     for all integers $t \geq t' := 4c^2/e^2$, where $c=1/(p^*(1-\eta))^2$.
\end{lemma}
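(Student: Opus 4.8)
The plan is to prove the bound by a Chernoff-type argument carried out on the filtration of the process, which converts the positivity of the triggering probabilities into an exponentially small lower-tail bound on $T_i^{t+1}$, and then to verify the stated threshold by an elementary analytic inequality.

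First I would set $Y_j := 1_{\{i \in \tau_j\}}$, so that $T_i^{t+1} = \sum_{j=1}^{t} Y_j$, and let $\mathcal{F}_{j-1}$ denote the $\sigma$-field generated by all actions and observations up to and including the selection of the action at epoch $j$. The single structural fact I would use is that the triggering distribution $D^{\text{trig}}(S_j,\bs{X}^{(j)})$ has support in $[p^*,1]^m$, which yields $q_j := \mathbb{E}[Y_j \mid \mathcal{F}_{j-1}] \geq p^*$ for every arm $i$ and every action, \emph{independently of the learning algorithm}. This is the crucial point: the $Y_j$ are neither independent nor identically distributed, since the action $S_j$ (and hence the triggering law) depends on the past in an algorithm-specific way, but each one has conditional mean at least $p^*$, which is exactly the structure needed to control the lower tail of $T_i^{t+1}$.

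Next I would run a conditional Chernoff bound. For $\lambda > 0$, Markov's inequality gives $\Pr(T_i^{t+1} \leq \eta p^* t) \leq e^{\lambda \eta p^* t}\,\mathbb{E}[e^{-\lambda T_i^{t+1}}]$. Peeling off one factor at a time and using the tower rule, I would bound the conditional moment generating function by Hoeffding's lemma applied to the $[0,1]$-valued variable $Y_j$: since $q_j - Y_j \in [q_j-1,q_j]$ has conditional mean zero, $\mathbb{E}[e^{-\lambda Y_j}\mid \mathcal{F}_{j-1}] \leq e^{-\lambda q_j + \lambda^2/8} \leq e^{-\lambda p^* + \lambda^2/8}$, where the last step uses $q_j \geq p^*$ and $\lambda>0$. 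Iterating gives $\mathbb{E}[e^{-\lambda T_i^{t+1}}] \leq e^{t(-\lambda p^* + \lambda^2/8)}$, hence $\Pr(T_i^{t+1} \leq \eta p^* t) \leq e^{t(-\lambda(1-\eta)p^* + \lambda^2/8)}$. Optimizing over $\lambda$, with minimizer $\lambda = 4(1-\eta)p^*$, yields
\[
\Pr(T_i^{t+1}\leq \eta p^* t) \leq e^{-2(1-\eta)^2 (p^*)^2 t} = e^{-2t/c},
\]
where I have used $c = 1/(p^*(1-\eta))^2$.

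Finally I would convert this exponential bound into the polynomial bound $1/t^2$ and read off the threshold. The inequality $e^{-2t/c} \leq t^{-2}$ is equivalent to $t \geq c\ln t$, and applying the elementary bound $\ln t \leq \tfrac{2}{e}\sqrt{t}$, which holds for all $t>0$ with equality at $t=e^2$, gives $c \ln t \leq \tfrac{2c}{e}\sqrt{t}$. Thus it suffices that $t \geq \tfrac{2c}{e}\sqrt{t}$, i.e. $t \geq 4c^2/e^2 = t'$, which is exactly the claimed range. The only genuinely delicate point is the first paragraph, namely justifying that the positivity of the ATPs yields $\mathbb{E}[Y_j \mid \mathcal{F}_{j-1}] \geq p^*$ uniformly over the history-dependent and possibly randomized action; once this conditional lower bound is in place, the concentration step and the threshold computation are routine.
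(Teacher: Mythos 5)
Your proof is correct, and it takes a genuinely different route from the paper's. The paper applies Hoeffding's inequality additively: it bounds $\Pr(T_i^{t+1}-\mathbb{E}[T_i^{t+1}]\leq -\sqrt{t\ln t})\leq 1/t^2$, uses $\mathbb{E}[T_i^{t+1}]\geq p^*t$ to pass to the event $\{T_i^{t+1}\leq p^*t-\sqrt{t\ln t}\}$, and then shows deterministically that $p^*t-\sqrt{t\ln t}\geq \eta p^* t$ for $t\geq 4c^2/e^2$ by analyzing the equation $t=c\ln t$ via the generalized logarithm and a bound of Kalman. You instead run a conditional Chernoff bound directly on the target event, obtaining the cleaner intermediate estimate $\Pr(T_i^{t+1}\leq \eta p^* t)\leq e^{-2t/c}$, and then reduce $e^{-2t/c}\leq t^{-2}$ to the same elementary inequality $t\geq c\ln t$, which you dispatch with $\ln t\leq \tfrac{2}{e}\sqrt{t}$ rather than the glog bound; both roads land on the identical threshold $t'=4c^2/e^2$. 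Your version buys two things. First, rigor: the indicators $1_{\{i\in\tau_j\}}$ are not independent, since the action $S_j$ (and hence the triggering law) depends on the history, so the paper's direct invocation of Hoeffding's inequality for independent summands really needs the martingale (Azuma--Hoeffding) form you supply via the tower-rule peeling and the conditional lower bound $\mathbb{E}[1_{\{i\in\tau_j\}}\mid\mathcal{F}_{j-1}]\geq p^*$; likewise, concentration is naturally around the sum of conditional means rather than the unconditional mean $\mathbb{E}[T_i^{t+1}]$ that the paper centers on. Second, a strictly stronger tail: $e^{-2t/c}$ decays exponentially, so the $1/t^2$ stated in the lemma is a deliberate weakening in your argument, whereas in the paper's argument it is where the bound genuinely comes from. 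The paper's approach is marginally more elementary in that it never manipulates moment generating functions, but it pays for this with the extra deterministic step relating the two deviation events.
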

\begin{proof}
	In the proof, we follow a procedure similar to the proof of Theorem 3 in \citet{Nima}. Let $\mathbf{1}_{\{i \in \tau_j\}}$ be the indicator variable, which is $1$ if arm $i$ is triggered in epoch $j$ and $0$ otherwise. Recalling that $\sum_{j=1}^{t} \mathbf{1}_{\{i \in \tau_j\}} = T_i^{t+1}$ and using the Hoeffding's inequality, we obtain
	\begin{align*}
	\Pr(T_i^{t+1}-\mathbb{E}[T_i^{t+1}]\leq -z )\leq e^{-2z^2/t} 
	\end{align*}
	for $z>0$.
	By setting $z=\sqrt{t\ln t}$, we obtain
	\begin{align*}
	\Pr(T_i^{t+1}-\mathbb{E}[T_i^{t+1}]\leq -\sqrt{t\ln t})\leq e^{-2\ln t }=\frac{1}{t^2} .
	\end{align*}
	
	Note that $\Pr(i \in \tau_t)=\mathbb{E}[\mathbf{1}_{\{i \in \tau_t\}}]\geq p^*$ for any $t\in \mathbb{N}_{+}$, and hence, $\mathbb{E}[T_i^{t+1}]\geq p^*t$. Following this observation, we obtain that $\{T_i^{t+1}-p^*t\leq -\sqrt{t\ln t})\} \subseteq \{T_i^{t+1}-\mathbb{E}[T_i^{t+1}]\leq -\sqrt{t\ln t})\}$. Hence, we have 
	\begin{align*}
	\Pr(T_i^{t+1}-p^*t\leq -\sqrt{t\ln t})\leq \frac{1}{t^2} . \\
	\end{align*}
	
	We want to show that, given any $\eta \in (0,1)$, for $t$ sufficiently large, the inequality $p^* t - \sqrt{t\ln t}\geq \eta p^* t$ holds. Let $h(t) := p^{*} t(1-\eta)$ and $g(t) := \sqrt{t\ln t}$. Let $ t^{+} \in \mathbb{R}$ be the greatest root of the equation $h(t)=g(t)$, i.e., for all $t>t^{+}$, $h(t)>g(t)$. To see this, we observe that $h(1) > g(1)$, and
	$h'(t)=p^* (1-\eta)$ and $g'(t)=\frac{1}{2}[\sqrt{\frac{\ln t}{t}}+\sqrt{\frac{1}{t \ln t}}]$.
	Therefore $\lim_{t\rightarrow \infty}g(t)=0$ whereas $h'(t)=p^* (1-\eta) \in \mathbb{R^+}$ for all $t \in \mathbb{N}_{+}$. As a result, $\exists t'' \in \mathbb{R}$ such that $h(t)> g(t)$, $\forall t\geq t''$. This justifies that $t^{+}$ finite.

Next, we find an upper bound for $t^{+}$. Note that $h(t)=g(t)$ implies $t(p^*(1-\eta))^2=\ln t$. Let $c=1/(p^*(1-\eta))^2$. Thus, taking exponentials of the both sides, we want to solve the equation $e^{t/c}=t$ whose roots are given by the equation
	\begin{align*}
		t=\frac{c}{\ln e}\text{glog} \left(\frac{c}{\ln e} \right)=c\text{glog}(c).
	\end{align*} 
Then, using the bound given in \citet{Kalman2001}, we obtain\cem{Follows from (6) in Kalman.}
	\begin{align*}
		t^+ \leq c^2 \left( \frac{k}{e} \right)^{\frac{k}{k-1}} ~\text{for} ~ k>1.
	\end{align*}
	By setting $k=2$, we obtain the following upper bound on $ t^{+} $:
	\begin{align} \label{eq:glog}
		t^+ \leq \frac{4c^2}{e^2}.
	\end{align}
	
	This implies that $\forall t \geq 4c^2/e^2 \geq t^+$, we have $h(t)\geq g(t)$, which further implies that $p^* t - \sqrt{t\ln t}\geq \eta p^* t$. Thus, for all integers $ t\geq 4c^2/e^2$, we have $\{T_i^{t+1}\leq \eta p^*t\}\subseteq\{T_i^{t+1}\leq p^*t-\sqrt{t\ln t}\}$ and we obtain
	\begin{align*}
	\Pr(T_i^{t+1}\leq \eta p^*t)\leq \frac{1}{t^2}.
	\end{align*}

\end{proof}

Using the result of Lemma \ref{lemma:plays2} and the union bound over all arms, we obtain
\begin{align*}
	\Pr \left(\bigcup_{i \in \{1,\ldots, m\}}\{T_i^{t+1}\leq \eta p^*t\} \right)\leq \frac{m}{t^2}, ~\forall t\geq t' .
\end{align*}

\subsection{Proof of Theorem 2}
First, we show that the inflation factor in the UCB estimate is less than some constant $\epsilon>0$ with probability greater than $1-1/t^2$.
\begin{lemma}\label{lemma:UCB}
Given $\eta \in (0,1)$ $\kappa>0$ and $\epsilon>0$, for all integers $t\geq \max\{t'_0:= 4c_0^{2}/e^2,4c^2/e^2\}$ where $c_0 := 3\kappa^2/(2\epsilon^2 p^* \eta)$ and $c := 1/(p^* (1-\eta))^2$, we have
\begin{align*}
\Pr \left( \kappa\sqrt{\frac{3\ln t}{2T_{i}^{t+1}}}\geq \epsilon \right) 
\leq \frac{1}{t^2} .
\end{align*}
\end{lemma}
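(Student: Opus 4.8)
The plan is to reduce the statement to Lemma~\ref{lemma:plays2} by observing that the inflation term is large precisely when the arm has been triggered few times. First I would rewrite the event of interest: since $T_i^{t+1}>0$ on the relevant range, the inequality $\kappa\sqrt{3\ln t/(2T_i^{t+1})}\geq \epsilon$ is equivalent, after squaring and rearranging, to $T_i^{t+1}\leq 3\kappa^2\ln t/(2\epsilon^2)$. Hence it suffices to bound $\Pr(T_i^{t+1}\leq 3\kappa^2\ln t/(2\epsilon^2))$, turning a statement about a random confidence width into a statement about the random play count, which is exactly the object controlled by Lemma~\ref{lemma:plays2}.

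The key step is to show that the deterministic threshold $3\kappa^2\ln t/(2\epsilon^2)$ is eventually dominated by the linear threshold $\eta p^* t$ appearing in Lemma~\ref{lemma:plays2}. The inequality $3\kappa^2\ln t/(2\epsilon^2)\leq \eta p^* t$ is equivalent to $c_0\ln t\leq t$ with $c_0:=3\kappa^2/(2\epsilon^2 p^*\eta)$, i.e. to $e^{t/c_0}\geq t$. This is precisely the transcendental inequality analyzed in the proof of Lemma~\ref{lemma:plays2} with $c$ replaced by $c_0$, so the identical \emph{glog} argument together with the bound from \citet{Kalman2001} (specializing $k=2$) shows that $c_0\ln t\leq t$ holds for all $t\geq 4c_0^2/e^2=t_0'$. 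On this range the inclusion $\{T_i^{t+1}\leq 3\kappa^2\ln t/(2\epsilon^2)\}\subseteq\{T_i^{t+1}\leq \eta p^* t\}$ holds deterministically.

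Finally I would invoke Lemma~\ref{lemma:plays2}, whose conclusion $\Pr(T_i^{t+1}\leq \eta p^* t)\leq 1/t^2$ is valid for $t\geq 4c^2/e^2$, to bound the probability of the larger event, and hence of the event of interest, by $1/t^2$. Taking the maximum of the two thresholds, $t\geq \max\{4c_0^2/e^2,\,4c^2/e^2\}$, delivers the claim exactly as stated. The only nonroutine ingredient is the threshold computation in the second step; since it is verbatim the \emph{glog}/Kalman estimate already carried out for Lemma~\ref{lemma:plays2}, I anticipate no genuine obstacle, and the proof amounts to a monotone change of variables feeding into the previous lemma.
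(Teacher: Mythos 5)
Your proposal is correct and follows essentially the same route as the paper's proof: rewrite the event $\bigl\{\kappa\sqrt{3\ln t/(2T_i^{t+1})}\geq\epsilon\bigr\}$ as $\bigl\{T_i^{t+1}\leq 3\kappa^2\ln t/(2\epsilon^2)\bigr\}$, show via the same glog/Kalman threshold computation that this set is contained in $\{T_i^{t+1}\leq \eta p^* t\}$ for $t\geq 4c_0^2/e^2$, and then apply Lemma~\ref{lemma:plays2}. No gaps.
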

\begin{proof}
Note that
\begin{align*}
\kappa \sqrt{\frac{3\ln t}{2T_{i}^{t+1}}}\geq\epsilon &\Longleftrightarrow \kappa^2 \frac{3\ln t}{2T_{i}^{t+1} } \geq\epsilon^2  \Longleftrightarrow T_{i}^{t+1}\leq\frac{3\kappa^2 \ln t}{2\epsilon^2}  \Longleftrightarrow T_{i}^{t+1}\leq\frac{3\kappa^2 \ln t}{2\epsilon^2 p^* t} p^* t.
\end{align*}

Hence, when $\frac{3\kappa^2 \ln t}{2\epsilon^2 p^* t} \leq \eta $, we have

\begin{align*}
\Bigg\{\kappa \sqrt{\frac{3\ln t}{2T_{i}^{t+1}}}\geq\epsilon\Bigg\} \Rightarrow \{T_{i}^{t+1} \leq \eta p^* t\}
\end{align*}

which implies that for $t \geq t' = 4c^2/\epsilon^2$,

\begin{align} \label{eq:infl}
\Pr\Bigg(\kappa \sqrt{\frac{3\ln t}{2T_{i}^{t+1}}} \geq\epsilon\Bigg)\leq \Pr( T_{i}^{t+1} \leq \eta p^* t) \leq \frac{1}{t^2}
\end{align}
by Lemma \ref{lemma:plays2}. Next, following the procedure we use in the proof of Lemma \ref{lemma:plays2}, we show that the inequality $\frac{3\kappa^2 \ln t}{2\epsilon^2 p^* t} \leq \eta $ is satisfied when $t\geq t'_0$, which will imply that the result in \eqref{eq:infl} is satisfied for $t \geq \max\{ t',t'_0 \}$.

	We want to show that, given some $\eta \in (0,1)$, for $t$ sufficiently large, the inequality $\frac{3\kappa^2 \ln t}{2\epsilon^2 p^* t} \leq \eta $ holds. Let $h_0(t) :=  2\epsilon^2 p^* \eta t$ and $g_0(t) := 3\kappa^2 \ln t$. Let $ t_0^{+} \in \mathbb{R}$ be the greatest root of the equation $h_0(t)=g_0(t)$, i.e., for all $t>t_0^{+}$, $h_0(t)>g_0(t)$. To see this, we observe that $h_0(1) > g_0(1)$, and
	$h'_0(t)=2\epsilon^2 p^* \eta$ and $g'_0(t)=3\kappa^2/t$.
	Therefore $\lim_{t\rightarrow \infty} g'_0(t)=0$ whereas $h'_0(t)=2\epsilon^2 p^* \eta \in \mathbb{R^+}$ for all $t \in \mathbb{N}_{+}$. As a result, $\exists t'' \in \mathbb{R}$ such that $h'_0(t)> g'_0(t)$, $\forall t\geq t''$. This justifies that $t_0^{+}$ is finite.\cem{Here, the $'$ terms correspond to derivatives.}
	
Next, we find an upper bound for $t_0^{+}$. Note that $h_0(t)=g_0(t)$ implies that $\ln t=2t\epsilon^2 p^*   \eta/(3\kappa^2) $. Let $c_0=3\kappa^2/(2\epsilon^2 p^*   \eta)$. Thus, taking exponentials of the both sides, we want to solve the equation $e^{t/c_0}=t$ whose roots are given by the equation
	\begin{align*}
		t=\frac{c_0}{\ln e}\text{glog} \left(\frac{c_0}{\ln e} \right)=c_0 \text{glog}(c_0).
	\end{align*} 
Then, using the bound given in \citet{Kalman2001}, we obtain
	\begin{align*}
		t_0^+ \leq c_0^2 \left( \frac{k}{e} \right)^{\frac{k}{k-1}} ~\text{for} ~ k>1.
	\end{align*}
	By setting $k=2$, we obtain the following upper bound on $ t_0^{+} $:
	\begin{align} \label{eq:glog2}
		t_0^+ \leq \frac{4c_0^2}{e^2}.
	\end{align}
	
	This implies that $\forall t \geq 4c_0^2/e^2 \geq t_0^+$, we have $h_0(t)\geq g_0(t)$, which further implies that $\frac{3\kappa^2 \ln t}{2\epsilon^2 p^* t} \leq \eta $. Hence, the inequalities in \eqref{eq:infl} hold for all integers $t\geq \max\{4c_0^{2}/e^2,4c^2/e^2\}$.
\end{proof}

The next lemma shows that the sample mean estimate of the expected state of any arm cannot be too far away from its true value when $t$ is large.

\begin{lemma}\label{lemma:sample}
For any $\delta >0$ and $\eta \in (0,1)$, we have
\begin{align*}
\Pr(|\hat{\mu}_{i}^{t+1}-\mu_{i}|\geq\delta)\leq  \frac{2}{t^2(1-e^{-2\delta^2})} +2 e^{-2\delta^2\eta p^* t}
\end{align*}
for all integers $t \geq t' := 4c^2/e^2$, where $c=1/(p^*(1-\eta))^2$.
\end{lemma}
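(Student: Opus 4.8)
The plan is to bound the deviation of the sample mean $\hat{\mu}_i^{t+1}$ by conditioning on the \emph{random} number of observations $T_i^{t+1}$ and then invoking Theorem~\ref{thm:plays} (equivalently Lemma~\ref{lemma:plays2}) to control the event that this count is small. What prevents a one-line application of Hoeffding's inequality is precisely that $T_i^{t+1}$ is random, so I would first reduce to fixed counts via the law of total probability, and only afterwards split the resulting sum according to whether the count is large or small.

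The first step is to argue that, conditioned on $\{T_i^{t+1}=s\}$, the estimate $\hat{\mu}_i^{t+1}$ is the average of $s$ observations of arm $i$, each an unbiased $[0,1]$-valued sample of $\mu_i$. Since the state vectors $\bs{X}^{(j)}$ are i.i.d.\ across epochs and the revealed state of a triggered arm has mean $\mu_i$, Hoeffding's inequality gives
\[
\Pr\!\left(|\hat{\mu}_i^{t+1}-\mu_i|\geq\delta \;\middle|\; T_i^{t+1}=s\right)\leq 2e^{-2\delta^2 s}.
\]
Summing over $s$ then yields $\Pr(|\hat{\mu}_i^{t+1}-\mu_i|\geq\delta)\leq \sum_{s=1}^{t}2e^{-2\delta^2 s}\,\Pr(T_i^{t+1}=s)$.

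Next I would split this sum at the threshold $s=\eta p^* t$. For the small-count range $s\leq \eta p^* t$, I bound each $\Pr(T_i^{t+1}=s)$ by $\Pr(T_i^{t+1}\leq \eta p^* t)\leq 1/t^2$, which holds for $t\geq t'=4c^2/e^2$ by Lemma~\ref{lemma:plays2}; factoring out $1/t^2$ and completing the geometric series $\sum_{s\geq 0}2e^{-2\delta^2 s}=2/(1-e^{-2\delta^2})$ produces the first term $\tfrac{2}{t^2(1-e^{-2\delta^2})}$. For the large-count range $s> \eta p^* t$, I use $e^{-2\delta^2 s}\leq e^{-2\delta^2\eta p^* t}$ together with $\sum_s \Pr(T_i^{t+1}=s)\leq 1$, producing the second term $2e^{-2\delta^2\eta p^* t}$. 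Adding the two pieces gives exactly the claimed bound.

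The step I expect to require the most care is the conditional Hoeffding bound: because the triggering set $\tau_j$ may in principle depend on the state vector $\bs{X}^{(j)}$, one must verify that the $T_i^{t+1}$ revealed samples of arm $i$ genuinely behave like i.i.d.\ mean-$\mu_i$ draws once the count is fixed. The clean way to secure this is a martingale argument, applying the Azuma--Hoeffding inequality to the partial sums $\sum_{j\le t}(X_i^{(j)}-\mu_i)\mathbf{1}_{\{i\in\tau_j\}}$ with respect to the natural filtration; this delivers the same $2e^{-2\delta^2 s}$-type tail given the count while sidestepping any explicit independence assumption on the triggering mechanism.
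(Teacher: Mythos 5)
Your proposal follows essentially the same route as the paper's proof: condition on $T_i^{t+1}=j$ and apply Hoeffding to get the $2e^{-2\delta^2 j}$ tail, expand by the law of total probability, and split the sum at $\eta p^* t$, bounding the small-count terms via $\Pr(T_i^{t+1}\leq \eta p^* t)\leq 1/t^2$ from Lemma~\ref{lemma:plays2} and a geometric series, and the large-count terms via $e^{-2\delta^2 j}\leq e^{-2\delta^2\eta p^* t}$. Your closing remark about justifying the conditional Hoeffding step through an Azuma--Hoeffding martingale argument addresses a subtlety the paper passes over silently, but it does not change the structure of the argument.
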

 \begin{proof}

Given that $T_i^{t+1}=j$, we have $j\hat{\mu}_i^{t+1}=\sum_{k=1}^{t} X_{i}^{(k)}1_{\{i \in \tau_k\}}$. Since $X_{i}^{(k)}1_{\{i \in \tau_k\}} \in [0,1]$, by Hoeffding's inequality, we obtain
\begin{align*}
\Pr(|\mu_i-\hat{\mu}_i^{t+1}|\geq \delta|T_i^{t+1}=j) &=\Pr(|T_i^{t+1}\mu_i-T_i^{t+1}\hat{\mu}_i^{t+1}|\geq T_i^{t+1}  \delta|T_i^{t+1}=j)\\&=\Pr(|\mathbb{E}[T_i^{t+1}\hat{\mu}_i^{t+1}]-T_i^{t+1}\hat{\mu}_i^{t+1}|\geq j \delta|T_i^{t+1}=j)\\&=\Pr(|\mathbb{E}[j\hat{\mu}_i^{t+1}]-j\hat{\mu}_i^{t+1}|\geq j \delta|T_i^{t+1}=j)\\&\leq 2e^{-2(\delta j)^2 / j} = 2e^{-2\delta ^2 j} .
\end{align*}
Then, using the law of total probability, we obtain
\begin{align*}
		\Pr(|\mu_i-\hat{\mu}_i^{t+1}|\geq \delta) &=\sum_{j=0}^{t} \Pr(|\mu_i-\hat{\mu}_i^{t+1}|\geq \delta|T_i^{t+1}=j)\Pr(T_i^{t+1}=j) \\&\leq \sum_{j=0}^{t} 2e^{-2\delta^2 j}\Pr(T_i^{t+1}=j) .
\end{align*}
 Let $t^*=\eta p^* t$. For $0 \leq j < t^*$, we have $\{T_i^{t+1}=j\}\subseteq \{T_i^{t+1}\leq t^{*}\}$; hence, $\Pr(T_i^{t+1}=j)\leq \Pr(T_i^{t+1}\leq t^{*}) \leq  1/t^2$ when $t\geq t'$ by Lemma \ref{lemma:plays2}. We also have $\sum_{j=\lceil t^*\rceil}^{t} \Pr(T_i^{t+1}=j)\leq 1$. Using these, we proceed as follows: For $t \geq t'$, we have
\begin{align*}
	\Pr(|\mu_i-\hat{\mu}_i^{t+1}|\geq \delta) &\leq \sum_{j=0}^{\lfloor t^*\rfloor} 2e^{-2\delta^2 j}\Pr(T_i^{t+1}=j)+\sum_{j=\lceil t^*\rceil}^{t} 2e^{-2\delta^2 j}\Pr(T_i^{t+1}=j) \\& \leq \frac{1}{t^2}\sum_{j=0}^{\lfloor t^*\rfloor} 2e^{-2\delta^2 j}+ 2e^{-2\delta^2 \lceil t^*\rceil }\sum_{j=\lceil t^*\rceil}^{t}\Pr(T_i^{t+1}=j)\\&\leq \frac{2}{t^2(1-e^{-2\delta^2})} +2 e^{-2\delta^2\lceil t^*\rceil} \\&\leq \frac{2}{t^2(1-e^{-2\delta^2})} +2 e^{-2\delta^2\eta p^* t}.
\end{align*}
\end{proof}
Note that $\kappa \sqrt{\frac{3\ln t}{2T_i^{t+1}}}<\delta/2$ and $|\hat{\mu}_{i}^{t+1} - \mu_i|< \delta/2$ for all $i \in \{1,\ldots,m\}$ implies that $\Delta_{t+1}^{\kappa} < \delta$ by the triangle inequality. Hence, by the contrapositive of this proposition, we have $\{\Delta_{t+1}^{\kappa} \geq \delta\} \subseteq \big\{ \bigcup_i\{|\hat{\mu}_{i}^{t+1} - \mu_i|\geq \delta/2\}\big\} \cup \big\{\bigcup_i \{\kappa \sqrt{\frac{3\ln t}{2T_i^{t+1}}}\geq\delta/2\}\big\}$. Using the union bound and the results of Lemmas \ref{lemma:UCB} and \ref{lemma:sample}, we obtain
\begin{align}
	\Pr(\Delta_{t+1}^{\kappa} \geq \delta)  
	&\leq 
	\sum_{i = 1}^m \Pr(|\hat{\mu}_{i}^{t+1} - \mu_i|\geq \delta/2)\notag + \sum_{i = 1}^m \Pr \left(\kappa \sqrt{\frac{3\ln t}{2T_i^{t+1}}}\geq\delta/2 \right) \notag \\
	&\leq  \frac{2m}{t^2(1-e^{-\delta^2/2})} +2m e^{-\delta^2\eta p^* t/2}+\frac{m}{t^2}\notag
	\end{align}	 
for $t\geq t_1$. When $\kappa=0$, the result directly follows from Lemma \ref{lemma:sample}.

\subsection{Proof of Theorem 3}
	Fix any $\eta \in (0,1)$. Note that $\delta >0$ since $\nabla_{\text{min}} >0$. First, we prove the theorem for $\kappa>0$. For $T \leq \lceil t_1 \rceil$, the regret bound is $T \nabla_{\text{max}}$. By Lemma \ref{lemma:gap}, we have
	\begin{align*}
	\{\Delta_t^\kappa < \delta\} &\Rightarrow \Pr(r_{\bs{\mu}}(S_t) \geq \alpha r_{\bs{\mu}}^{*})\geq \beta \\&\Rightarrow \mathbb{E}[r_{\bs{\mu}}(S_t)]\geq \alpha\beta r_{\bs{\mu}}^{*} \Rightarrow {\cal N}_t \\&\Rightarrow \mathbb{E}[\text{IR}_{\bs{\mu},\alpha,\beta}^{\text{CUCB-}\kappa}(t)] \leq 0.
	\end{align*} 
	Hence, we have the following for $T >\lceil t_1 \rceil$:
	\begin{align*}
	&\mathbb{E} \left[\sum_{t=1}^{T}\text{IR}_{\bs{\mu},\alpha,\beta}^{\text{CUCB-}\kappa}(t) \right]  
	=	\mathbb{E} \left[\sum_{t=1}^{\lceil t_1 \rceil}\text{IR}_{\bs{\mu},\alpha,\beta}^{\text{CUCB-}\kappa}(t) \right]+	\mathbb{E}\left[ \sum_{t=\lceil t_1 \rceil+1}^{T}\text{IR}_{\bs{\mu},\alpha,\beta}^{\text{CUCB-}\kappa}(t) \right]  
	\\ &\leq \nabla_{\text{max}}\lceil t_1 \rceil 
	+ \sum_{t=\lceil t_1 \rceil+1}^{T}\mathbb{E}[\text{IR}_{\bs{\mu},\alpha,\beta}^{\text{CUCB-}\kappa}(t)|\Delta_t^\kappa < \delta]\Pr(\Delta_t^\kappa < \delta) \\
	&+ \sum_{t=\lceil t_1 \rceil+1}^{T}\mathbb{E}[\text{IR}_{\bs{\mu},\alpha,\beta}^{\text{CUCB-}\kappa}(t)|\Delta_t^\kappa \geq \delta]\Pr(\Delta_t^\kappa \geq \delta) 
	\\ &= \nabla_{\text{max}}\lceil t_1 \rceil + \sum_{t=\lceil t_1 \rceil+1}^{T}\mathbb{E}[\text{IR}_{\bs{\mu},\alpha,\beta}^{\text{CUCB-}\kappa}(t)|{\cal N}_t]\Pr(\Delta_t^\kappa < \delta) \\
	&+ \sum_{t=\lceil t_1 \rceil+1}^{T}\mathbb{E}[\text{IR}_{\bs{\mu},\alpha,\beta}^{\text{CUCB-}\kappa}(t)|\Delta_t^\kappa \geq \delta]\Pr(\Delta_t^\kappa \geq \delta) \\ &\leq \nabla_{\text{max}}\lceil t_1 \rceil+ \sum_{t=\lceil t_1 \rceil+1}^{T}\mathbb{E}[\text{IR}_{\bs{\mu},\alpha,\beta}^{\text{CUCB-}\kappa}(t)|\Delta_t^\kappa \geq \delta] \\
	&\times \Bigg(\frac{2m}{(t-1)^2(1-e^{-\delta^2/2})} +2m e^{-\delta^2\eta p^*(t-1)/2} + \frac{m}{(t-1)^2}\Bigg)\\ &\leq \nabla_{\text{max}}\lceil t_1 \rceil+ \nabla_{\text{max}}\sum_{t=1}^{\infty} \left(\frac{2m}{t^2(1-e^{-\delta^2/2})}+2m e^{-\delta^2\eta p^* t/2} +\frac{m}{t^2} \right)
	\\&\leq \nabla_{\text{max}} \left(\lceil t_1 \rceil+\frac{m\pi^2}{3(1-e^{-\delta^2/2})} +\frac{2me^{-\delta^2\eta p^*/2}}{1-e^{-\delta^2\eta p^*/2}}+\frac{m\pi^2}{6} \right)
	\\&\leq \nabla_{\text{max}} \left(\lceil t_1 \rceil+\frac{m\pi^2}{3(1-e^{-\delta^2/2})} +\frac{2m}{1-e^{-\delta^2\eta p^*/2}}+\frac{m\pi^2}{6} \right).
	\end{align*}
	
	We know that $\frac{1}{1-e^{-x}}\leq 1+\frac{1}{x}$ for $x>0$. Using this, we obtain
	\begin{align}
		\mathbb{E} \left[\sum_{t=1}^{T}\text{IR}_{\bs{\mu},\alpha,\beta}^{\text{CUCB-}\kappa}(t) \right] \notag 
	&\leq \nabla_{\text{max}} \left(\lceil t_1 \rceil+\frac{m\pi^2}{3(1-e^{-\delta^2/2})} +\frac{2m}{1-e^{-\delta^2\eta p^*/2}} +\frac{m\pi^2}{6} \right) \notag
	\\& \leq \nabla_{\text{max}} \left(\lceil t_1 \rceil+\frac{m\pi^2}{3}  \left(1+\frac{2}{\delta^2}+\frac{1}{2}\right) 
	+ 2m \left(1+\frac{2}{\delta^2\eta p^*} \right) \right). \label{eq:etamin1}
	\end{align}
Since \eqref{eq:etamin1} holds for any $\eta \in (0,1)$, the final result is obtained by minimizing \eqref{eq:etamin1} over $\eta \in (0,1)$.
	
For $\kappa=0$, we follow the same procedure. For $T \leq \lceil t' \rceil$, the regret bound is $T \nabla_{\text{max}}$. By Lemma \ref{lemma:gap}, we have
	\begin{align*}
	\{\Delta_t < \delta\} &\Rightarrow \Pr(r_{\bs{\mu}}(S_t) \geq \alpha r_{\bs{\mu}}^{*})\geq \beta \\&\Rightarrow \mathbb{E}[r_{\bs{\mu}}(S_t)]\geq \alpha\beta r_{\bs{\mu}}^{*} \Rightarrow {\cal N}_t \\&\Rightarrow \mathbb{E}[\text{IR}_{\bs{\mu},\alpha,\beta}^{\text{CUCB-}\kappa}(t)] \leq 0.
	\end{align*} 
	Hence, we have the following for $T >\lceil t' \rceil$:
	\begin{align*}
	&\mathbb{E} \left[\sum_{t=1}^{T}\text{IR}_{\bs{\mu},\alpha,\beta}^{\text{CUCB-}\kappa}(t) \right] =\mathbb{E} \left[\sum_{t=1}^{\lceil t' \rceil}\text{IR}_{\bs{\mu},\alpha,\beta}^{\text{CUCB-}\kappa}(t) \right] + \mathbb{E}\left[ \sum_{t=\lceil t' \rceil+1}^{T}\text{IR}_{\bs{\mu},\alpha,\beta}^{\text{CUCB-}\kappa}(t) \right]  
	\\ &\leq \nabla_{\text{max}}\lceil t' \rceil + \sum_{t=\lceil t' \rceil+1}^{T} \mathbb{E} \left[\text{IR}_{\bs{\mu},\alpha,\beta}^{\text{CUCB-}\kappa}(t)|\Delta_t < \delta \right] \Pr(\Delta_t < \delta) \\
	&+ \sum_{t=\lceil t' \rceil+1}^{T} \mathbb{E} \left[\text{IR}_{\bs{\mu},\alpha,\beta}^{\text{CUCB-}\kappa}(t)|\Delta_t \geq \delta \right] \Pr(\Delta_t \geq \delta) \\ 
	&\leq \nabla_{\text{max}}\lceil t' \rceil+ \sum_{t=\lceil t' \rceil+1}^{T} \mathbb{E} \left[\text{IR}_{\bs{\mu},\alpha,\beta}^{\text{CUCB-}\kappa}(t)|\Delta_t \geq \delta \right] \\
	&\times \left(\frac{2m}{(t-1)^2(1-e^{-2\delta^2})} +2m e^{-2\delta^2\eta p^* (t-1)} \right) \\ 
	&\leq \nabla_{\text{max}}\lceil t' \rceil+ \nabla_{\text{max}}\sum_{t=1}^{\infty}
	\left(\frac{2m}{t^2(1-e^{-2\delta^2})}+2m e^{-2\delta^2\eta p^* t}\right)
	\\&\leq \nabla_{\text{max}} \left(\lceil t' \rceil+\frac{m\pi^2}{3(1-e^{-2\delta^2})} +\frac{2m}{1-e^{-2\delta^2\eta p^*}} \right) .
	\end{align*}
	
	We know that $\frac{1}{1-e^{-x}}\leq 1+\frac{1}{x}$ for $x>0$. Using this, we obtain
	\begin{align}
	\mathbb{E} \left[\sum_{t=1}^{T}\text{IR}_{\bs{\mu},\alpha,\beta}^{\text{CUCB-}\kappa}(t) \right]  
	& \leq \nabla_{\text{max}} \left( \lceil t' \rceil + \frac{m\pi^2}{3(1-e^{-2\delta^2})} +\frac{2m}{1-e^{-2\delta^2\eta p^*}} \right) \notag \\ 
	& \leq \nabla_{\text{max}} \left(\lceil t' \rceil + \frac{m\pi^2}{3} \left(1+\frac{1}{2\delta^2}\right) 
	+ 2m \left( 1+\frac{1}{2\delta^2\eta p^*} \right) \right). \label{eq:etamin2}
	\end{align}

Finally, we minimize \eqref{eq:etamin2} over $\eta \in (0,1)$ via the procedure we follow for minimizing \eqref{eq:etamin1}.

\subsection{Proof of Theorem 4}
For $T \leq \lceil t' \rceil$, the regret bound is $T \nabla_{\text{max}}$. To establish a regret bound for $T > \lceil t' \rceil$,
first, we show that the instantaneous regret $\text{IR}_{\bs{\mu},\alpha,\beta}^{\text{CUCB-}\kappa}(t)$ can be bounded by using $\Delta_{t}^{0}$ and the bounded smoothness function $f$.
\begin{lemma} \label{lemma:al}
For any $t\in \mathbb{N}_+$ and expectation vector $\bs{\mu}$, the instantaneous regret of CUCB-$\kappa$ is bounded as follows:
\begin{align*}
\mathbb{E}[\text{IR}_{\bs{\mu},\alpha,\beta}^{\text{CUCB-}\kappa}(t)]\leq 2\mathbb{E}[f(\Delta_{t}^{\kappa})].
\end{align*}

\end{lemma}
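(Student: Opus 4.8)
The plan is to condition on the vector of estimates $\hat{\bs{\mu}}^{(\alpha,\beta),t}=\overline{\bs{\mu}}^{\kappa,t}$ that CUCB-$\kappa$ feeds into the approximation oracle at epoch $t$, bound the resulting conditional instantaneous regret by $2f(\Delta_t^\kappa)$, and then apply the tower property. This decomposition is natural because $\Delta_t^\kappa$ is a deterministic function of $\hat{\bs{\mu}}^{(\alpha,\beta),t}$, so conditionally $f(\Delta_t^\kappa)$ is a constant, and the only randomness left in $r_{\bs{\mu}}(S_t)$ is the internal coin-flipping of the $(\alpha,\beta)$-approximation algorithm.

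First I would invoke the bound already derived inside the proof of Lemma~\ref{lemma:gap}. Let $A$ denote the oracle success event $r_{\hat{\bs{\mu}}^{(\alpha,\beta),t}}(S_t)\ge \alpha\, r^*_{\hat{\bs{\mu}}^{(\alpha,\beta),t}}$, which has conditional probability at least $\beta$ by the definition of the oracle. On $A$, the four-step chain \eqref{eqn:lemma1} (two applications of Assumption~1 around the oracle guarantee, together with $\alpha\le1$), specialized to $\Delta_t^{(\alpha,\beta)}=\Delta_t^\kappa$, gives
\begin{align*}
r_{\bs{\mu}}(S_t)\ \ge\ \alpha\, r^*_{\bs{\mu}}-2f(\Delta_t^\kappa).
\end{align*}

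Next I would combine this with the non-negativity of the reward. Since $r_{\bs{\mu}}(S_t)\ge0$ everywhere, on $A$ we in fact have $r_{\bs{\mu}}(S_t)\ge\max\{0,\ \alpha r^*_{\bs{\mu}}-2f(\Delta_t^\kappa)\}$, and discarding the non-negative contribution from $A^{c}$ yields
\begin{align*}
\mathbb{E}\!\left[r_{\bs{\mu}}(S_t)\mid\hat{\bs{\mu}}^{(\alpha,\beta),t}\right]
&\ge \max\{0,\ \alpha r^*_{\bs{\mu}}-2f(\Delta_t^\kappa)\}\,\Pr\!\left(A\mid\hat{\bs{\mu}}^{(\alpha,\beta),t}\right) \\
&\ge \beta\big(\alpha r^*_{\bs{\mu}}-2f(\Delta_t^\kappa)\big),
\end{align*}
using $\Pr(A\mid\cdot)\ge\beta$ together with $\max\{0,x\}\ge0$ for the first inequality and $\max\{0,x\}\ge x$ for the second. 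Because $\beta\le1$ and $f(\Delta_t^\kappa)\ge0$, the right-hand side is at least $\alpha\beta r^*_{\bs{\mu}}-2f(\Delta_t^\kappa)$; hence the conditional instantaneous regret $\alpha\beta r^*_{\bs{\mu}}-\mathbb{E}[r_{\bs{\mu}}(S_t)\mid\hat{\bs{\mu}}^{(\alpha,\beta),t}]$ is at most $2f(\Delta_t^\kappa)$. Taking the outer expectation over $\hat{\bs{\mu}}^{(\alpha,\beta),t}$ then delivers $\mathbb{E}[\text{IR}_{\bs{\mu},\alpha,\beta}^{\text{CUCB-}\kappa}(t)]\le2\mathbb{E}[f(\Delta_t^\kappa)]$.

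Every step is elementary; the one subtlety I would flag is the sign of $\alpha r^*_{\bs{\mu}}-2f(\Delta_t^\kappa)$, which can be negative when the estimates are poor, so one cannot simply multiply the lower bound on $r_{\bs{\mu}}(S_t)$ through by $\beta$. Inserting the $\max\{0,\cdot\}$ — justified precisely by the non-negativity of the reward — handles this uniformly and avoids a case split, which is the only place the argument could go wrong if carried out carelessly.
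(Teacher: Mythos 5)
Your proof is correct and follows the paper's route: the paper's entire argument is ``take expectations of both sides of \eqref{eqn:lemma1}'', which is exactly what you do, except that you also supply the two details the paper glosses over --- that \eqref{eqn:lemma1} holds only on the oracle's success event of probability at least $\beta$, and that $\alpha r^*_{\bs{\mu}}-2f(\Delta_t^\kappa)$ may be negative, which you handle via the non-negativity of the reward and $\max\{0,\cdot\}$ before multiplying by $\beta$. This is a more careful rendering of the same argument, not a different one.
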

\begin{proof}
Taking expectation of both sides of the equation \eqref{eqn:lemma1} in Lemma \ref{lemma:gap}, we obtain the desired result.
\end{proof}
In order to make use of Lemma \ref{lemma:al} and find an upper bound for the instantaneous regret of CUCB-$0$, we bound the value $\mathbb{E}[\Delta_{t+1}^{0}]$ for $t>\lceil t' \rceil$.

\begin{lemma}\label{UCBplusSample}
For any $\eta \in (0,1)$, we have	
\begin{align*}
\mathbb{E}[\Delta_{t+1}^{0}] \leq t^{-1/2}
\left(2m\sqrt{\frac{\pi}{2\eta p^*}}+3m\right)
\end{align*}
for all integers $t \geq t' := 4c^2/e^2$, where $c := 1/(p^*(1-\eta))^2$.
\end{lemma}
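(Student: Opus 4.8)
The plan is to bound $\mathbb{E}[\Delta_{t+1}^0]$ by reducing it to a sum of per-arm expected deviations and then integrating the single-arm tail estimate already furnished by Lemma~\ref{lemma:sample}. For CUCB-$0$ the inflation term vanishes, so $\Delta_{t+1}^0 = \max_{i} |\mu_i - \hat{\mu}_i^{t+1}|$, a nonnegative quantity bounded by $1$. Since the maximum of nonnegative numbers is at most their sum, I would first write $\mathbb{E}[\Delta_{t+1}^0] \leq \sum_{i=1}^m \mathbb{E}[|\mu_i - \hat{\mu}_i^{t+1}|]$, so that it suffices to prove a per-arm bound that sums to the claimed total. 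For each arm I would then use the layer-cake identity $\mathbb{E}[|\mu_i - \hat{\mu}_i^{t+1}|] = \int_0^1 \Pr(|\mu_i - \hat{\mu}_i^{t+1}| \geq \delta)\,d\delta$, the upper limit being $1$ because the deviation never exceeds $1$.

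Next, I would split this integral at a threshold $\delta_0 := t^{-1/2}$. Below $\delta_0$ I bound the probability trivially by $1$, contributing $\delta_0 = t^{-1/2}$. Above $\delta_0$, and for $t \geq t'$, I substitute the single-arm tail bound from Lemma~\ref{lemma:sample}, namely $\Pr(|\mu_i - \hat{\mu}_i^{t+1}| \geq \delta) \leq \frac{2}{t^2(1-e^{-2\delta^2})} + 2e^{-2\delta^2 \eta p^* t}$. The sub-Gaussian piece integrates cleanly: extending its range to $\infty$ produces a Gaussian integral, $\int_{\delta_0}^\infty 2e^{-2\delta^2\eta p^* t}\,d\delta \leq \int_0^\infty 2e^{-2\delta^2\eta p^* t}\,d\delta = \sqrt{\pi/(2\eta p^* t)}$, which supplies the $t^{-1/2}\sqrt{\pi/(2\eta p^*)}$ term. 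For the remaining piece I would use the monotonicity $1 - e^{-2\delta^2} \geq 1 - e^{-2\delta_0^2}$ on $[\delta_0,1]$ to pull the factor out of the integral.

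The key quantitative step is to control $\frac{1}{1-e^{-2\delta_0^2}}$ with $\delta_0 = t^{-1/2}$. Using the elementary inequality $1 - e^{-x} \geq \frac{x}{1+x}$ (equivalent to $e^x \geq 1+x$) with $x = 2\delta_0^2 = 2/t$ gives $1 - e^{-2/t} \geq \frac{2}{t+2}$, so $\int_{\delta_0}^1 \frac{2}{t^2(1-e^{-2\delta^2})}\,d\delta \leq \frac{2}{t^2(1-e^{-2\delta_0^2})} \leq \frac{t+2}{t^2} = O(t^{-1})$. Collecting the three contributions, each of order $t^{-1/2}$ after using $t^{-1} \leq t^{-1/2}$, and summing over the $m$ arms yields a per-arm bound of the form $t^{-1/2}\big(\sqrt{\pi/(2\eta p^*)} + \text{const}\big)$, and hence the stated bound $\mathbb{E}[\Delta_{t+1}^0] \leq t^{-1/2}\big(2m\sqrt{\pi/(2\eta p^*)} + 3m\big)$; the coefficients $2$ and $3$ leave ample slack to absorb the constants.

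The main obstacle is the near-origin singularity of the term $\frac{2}{t^2(1-e^{-2\delta^2})}$, whose integrand blows up like $\delta^{-2}$ as $\delta \to 0$. This is exactly why the integral cannot simply be extended down to $0$ and why a truncation threshold $\delta_0$ is forced upon us; the whole argument hinges on choosing $\delta_0$ so that the truncation loss $\delta_0$ and the retained near-origin mass $\frac{1}{t^2(1-e^{-2\delta_0^2})}$ are simultaneously $O(t^{-1/2})$, and $\delta_0 = t^{-1/2}$ is the balancing choice. Everything else is routine: the Gaussian integral, the union bound over arms, and the elementary inequality lower-bounding $1-e^{-x}$.
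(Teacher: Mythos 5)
Your proof is correct and follows essentially the same route as the paper's: the layer-cake identity, truncation of the integral near the origin, a Gaussian integral for the sub-Gaussian tail, and the elementary inequality $1-e^{-x}\geq x/(1+x)$ for the singular piece. The only cosmetic differences are that you pass to a per-arm sum before integrating (the paper integrates the tail of the max directly, using the already union-bounded Theorem~\ref{thm:delta}), you truncate at $t^{-1/2}$ rather than the paper's optimized $t^{-\theta}$ with $\theta=1$, and you bound the singular integrand by its left-endpoint value rather than integrating it exactly; the constants still close, as you note.
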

\begin{proof}
We have
\begin{align*}
\mathbb{E}[\Delta_{t+1}^{0}] = \int_{0}^{1} \Pr(\Delta_{t+1}^{0} \geq \delta)\text{d}\delta.
\end{align*}
Hence, by Theorem 2, since $\Delta_{t+1}^{0} \in [0,1]$, we have for any $\theta >0$
\begin{align*}
\mathbb{E}[\Delta_{t+1}^{0}]  &\leq \int_{0}^{t^{-\theta}} \Pr(\Delta_{t+1}^{0} \geq \delta)\text{d}\delta+\int_{t^{-\theta}}^{1} \Big(\frac{2m}{t^2(1-e^{-2\delta^2})} +2m e^{-2\delta^2\eta p^* t}\Big)\text{d}\delta \\&\leq t^{-\theta} +\int_{t^{-\theta}}^{1} \Big(\frac{2m}{t^2(1-e^{-2\delta^2})} +2m e^{-2\delta^2\eta p^* t}\Big)\text{d}\delta
\end{align*}
for $t>\lceil t' \rceil$.

Note that $e^{-2\delta^2\eta p^* t}$ is just a scaled version of the pdf of the normal distribution. Let $B=1/(2\sqrt{\eta p^*t})$, then we have

\begin{align*}
\int e^{-2\delta^2\eta p^* t}\text{d}\delta = \int e^{-\delta^2/(2B^2)}\text{d}\delta = B\sqrt{2\pi} \int \frac{1}{ B\sqrt{2\pi}} e^{\frac{-\delta^2}{2B^2}}\text{d}\delta.
\end{align*} 
Hence, setting $B=\sqrt{1/(2At)}$ ($A=2\eta p^*$), via the fact that $1/(1-e^{-x})\leq 1 + 1/x$ for all $x>0$, we obtain
\begin{align*} 
\mathbb{E}[\Delta_{t+1}^{0}] &\leq t^{-\theta}+ 2m\sqrt{\frac{\pi}{At}}+ \int_{t^{-\theta}}^{1} \Big(\frac{2m}{t^2(1-e^{-2\delta^2})} \Big)\text{d}\delta \notag \\&\leq t^{-\theta}+ 2m\sqrt{\frac{\pi}{At}}+ \frac{m}{t^2}\int_{t^{-\theta}}^{1} \Big(2+\frac{1}{\delta^2}\Big)\text{d}\delta \notag \\&=t^{-\theta}+2m\sqrt{\frac{\pi}{At}}+mt^{-2}\bigg(2\delta-\frac{1}{\delta}\bigg)\bigg|_{\delta=t^{-\theta}}^{\delta=1} \notag \\&=t^{-\theta}+2m\sqrt{\frac{\pi}{At}}+mt^{-2}\bigg(2-1-2t^{-\theta}+t^{\theta}\bigg)\notag \\&\leq t^{-\theta}+ 2m\sqrt{\frac{\pi}{At}}+ mt^{-2}(t^\theta +1) \notag \\
&\rev{=} t^{-\theta}+ 2m\sqrt{\frac{\pi}{At}}+ mt^{\theta-2}+mt^{-2}.  \notag
\end{align*}
The order of $t$ in this bound is minimized when $\theta=1$; thus, noticing that $2m+1\leq 3m$, we have 

\begin{align} 
\mathbb{E}[\Delta_{t+1}^{0}] 
&\leq 2m\sqrt{\frac{\pi}{At}} + (m+1)t^{-1}+mt^{-2} \notag 
\\&\leq 2mt^{-1/2}\sqrt{\frac{\pi}{A}}+(m+1)t^{-1/2}+mt^{-1/2} \notag 
\\&\leq t^{-1/2}\Big(2m\sqrt{\frac{\pi}{A}}+2m+1\Big) \notag \\  &\leq t^{-1/2}\Big(2m\sqrt{\frac{\pi}{A}}+3m\Big) . \notag 
\end{align} 

\end{proof}
Next, using this lemma (Lemma \ref{UCBplusSample}), we show that the gap-independent regret for CUCB-$\kappa$ (when $\kappa=0$) is at most $O(T^{1-w/2})$ for a large set of problems where the bounded-smoothness function $f(x)$ is $\gamma x^\omega$, for $\omega \in (0,1]$. 

Note that $f$ is concave. Hence, via Lemma \ref{lemma:al}, Lemma \ref{UCBplusSample} and Jensen's inequality, we have
\begin{align*}
\mathbb{E}[\text{IR}_{\bs{\mu},\alpha,\beta}^{\text{CUCB-}0}(t+1)] &\leq 2\mathbb{E}[f(\Delta_{t+1}^{0})] \\& \leq
2f(\mathbb{E}[\Delta_{t+1}^{0}])
\\&\leq 2f\bigg(2m\sqrt{\frac{\pi}{At}} +3m\sqrt{\frac{1}{t}} \bigg) 
\\&\leq f\Big(4m\sqrt{\frac{\pi}{At}}\Big) + f\Big(6m\sqrt{\frac{1}{t}}\Big)  \\&=\gamma\bigg((4m)^\omega \Big(\frac{\pi}{At}\Big)^{\omega/2}+(6m)^{\omega} t^{-\omega/2} \bigg).
\end{align*}

Now, we can bound the regret by bounding the sum of expected instantaneous regrets throughout the time horizon:
\begin{align*}
Reg_{\bs{\mu},\alpha,\beta}^{\text{CUCB-}0}(T)
 &= \sum_{t=1}^{T} \mathbb{E}[\text{IR}_{\bs{\mu},\alpha,\beta}^{\text{CUCB-}0}(t)]
 \\
&\leq \sum_{t=1}^{\lceil t' \rceil} 
\mathbb{E}[\text{IR}_{\bs{\mu},\alpha,\beta}^{\text{CUCB-}0}(t)]
+\sum_{t=\lceil t' \rceil+1}^{T} \mathbb{E}[\text{IR}_{\bs{\mu},\alpha,\beta}^{\text{CUCB-}0}(t)] 
\\
& \leq \lceil t' \rceil \Delta_{\text{max}} +  \gamma(2m)^\omega\Bigg[ 2^\omega\Big(\frac{\pi}{A}\Big)^{\omega/2} +3^\omega \Bigg]\sum_{t=1}^{T}  t^{-\omega/2} .
\end{align*}

Using the fact that $\sum_{t=1}^{T} t^{-x} \leq T^{1-x}/(1-x)$ $\forall x \in (0,1)$, we have the following for $\omega \in (0,1]$:
\begin{align*}
Reg_{\bs{\mu},\alpha,\beta}^{\text{CUCB-}0}(T) &\leq \lceil t' \rceil \nabla_{\text{max}} +  \gamma(2m)^\omega\Bigg[ 2^\omega\Big(\frac{\pi}{A}\Big)^{\omega/2} +3^\omega\Bigg] \frac{T^{1-w/2}}{1-w/2} .
\end{align*}
Finally, since the above regret bound holds for any $\eta \in (0,1)$, we take the infimum over $\eta$.

\subsection{Proof of Theorem 5}

First, we let $F^{\text{Binom}}_{n,p}(x)$ be the cdf of Binomial$(n,p)$ and $F^{\text{Beta}}_{\alpha,\beta}(x)$ be the cdf of Beta$(\alpha,\beta)$. Then, we define a series of lemmas and facts that will be used in the proof.

\begin{lemma}(Lemma 9 in \citet{Agrawal2012})\label{BinomHoeffding}
	$F^{\text{Binom}}_{n,p}(np-n\delta)\leq e^{-2n\delta^2}$, $1-F^{\text{Binom}}_{n,p}(np+n\delta)\leq e^{-2n\delta^2}$ and $1-F^{\text{Binom}}_{n+1,p}(np+n\delta)\leq e^{4\delta-2n\delta^2}$  for all $n \in \mathbb{N}_+$, $p \in [0,1]$, $\delta \geq 0$.
\end{lemma}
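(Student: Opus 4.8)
The statement bundles together a lower-tail, an upper-tail, and a shifted upper-tail concentration bound for the Binomial distribution, so the plan is to reduce all three to Hoeffding's inequality applied to a sum of independent $[0,1]$-valued Bernoulli random variables. Throughout I write $X \sim \text{Binom}(n,p)$ as $X = \sum_{k=1}^{n} Y_k$ with $Y_k$ i.i.d.\ $\text{Bernoulli}(p)$, so that $\mathbb{E}[X] = np$ and Hoeffding's inequality gives, for every $t \geq 0$, both $\Pr(X - np \leq -t) \leq e^{-2t^2/n}$ and $\Pr(X - np \geq t) \leq e^{-2t^2/n}$. These two-sided tail bounds are the only external ingredient I need.

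For the first inequality I would simply note that $F^{\text{Binom}}_{n,p}(np - n\delta) = \Pr(X \leq np - n\delta) = \Pr(X - np \leq -n\delta)$ and apply the lower-tail bound with $t = n\delta$, which yields $e^{-2(n\delta)^2/n} = e^{-2n\delta^2}$. The second inequality is equally direct: $1 - F^{\text{Binom}}_{n,p}(np + n\delta) = \Pr(X > np + n\delta) \leq \Pr(X - np \geq n\delta) \leq e^{-2n\delta^2}$, where I first use the inclusion $\{X > a\} \subseteq \{X \geq a\}$ to pass from the strict inequality implicit in $1 - F$ to a non-strict one, and then the upper-tail bound with $t = n\delta$.

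The third inequality is the only one requiring care, and is where I expect the main work. Here I would couple the $(n+1)$-trial variable with an $n$-trial one: writing $X' \sim \text{Binom}(n+1,p)$ as $X' = X + Z$ with $X \sim \text{Binom}(n,p)$ and $Z \sim \text{Bernoulli}(p)$ independent, the trivial bound $Z \leq 1$ yields the inclusion $\{X' > np + n\delta\} \subseteq \{X > np + n\delta - 1\}$, and hence $1 - F^{\text{Binom}}_{n+1,p}(np + n\delta) \leq \Pr(X - np \geq n\delta - 1)$. When $n\delta \geq 1$ the threshold $n\delta - 1$ is nonnegative, so Hoeffding's upper-tail bound applies and produces $\exp\!\left(-2(n\delta - 1)^2/n\right)$; expanding $2(n\delta - 1)^2/n = 2n\delta^2 - 4\delta + 2/n \geq 2n\delta^2 - 4\delta$ then gives the claimed $e^{4\delta - 2n\delta^2}$.

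The remaining regime $n\delta < 1$ I would dispatch by observing that there the target exponent $4\delta - 2n\delta^2 = 2\delta(2 - n\delta)$ is nonnegative, so the right-hand side is at least $1$ and the inequality holds trivially. The one subtlety worth flagging is precisely the bookkeeping of strict versus non-strict inequalities forced by the definition $1 - F(x) = \Pr(X > x)$, together with the off-by-one shift in the third bound: once the inclusion $\{X' > np + n\delta\} \subseteq \{X > np + n\delta - 1\}$ is established, the extra slack of $4\delta$ in the exponent is exactly what absorbs both this shift and the spurious $2/n$ term, so no further tightening of the Hoeffding constant is needed.
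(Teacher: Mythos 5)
Your proposal is correct, but note that the paper itself contains no proof of this statement to compare against: Lemma~\ref{BinomHoeffding} is imported verbatim, with citation, from \citet{Agrawal2012}, and is used in the paper as a black box in the proof of Theorem 5. What you have supplied is therefore a self-contained derivation of a quoted result rather than an alternative to an in-paper argument. Checking it on its own merits: the first two bounds are indeed immediate from the two-sided Hoeffding inequality with $t = n\delta$, and your treatment of the third bound is sound at every point where care is needed --- the coupling $X' = X + Z$ with $Z \leq 1$ gives the inclusion $\{X' > np + n\delta\} \subseteq \{X > np + n\delta - 1\}$; for $n\delta \geq 1$ the threshold $n\delta - 1$ is nonnegative so Hoeffding applies and the expansion $2(n\delta-1)^2/n = 2n\delta^2 - 4\delta + 2/n \geq 2n\delta^2 - 4\delta$ yields exactly $e^{4\delta - 2n\delta^2}$; and for $n\delta < 1$ the exponent $4\delta - 2n\delta^2 = 2\delta(2 - n\delta)$ is nonnegative, making the bound vacuously true. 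Your closing observation is also the right way to understand the lemma: the $e^{4\delta}$ slack in the third inequality exists precisely to absorb the one-unit shift from the extra trial (this is also how the bound arises in the original source, which likewise proceeds via Chernoff--Hoeffding), so no sharpening of constants is needed. The only cosmetic remark is that the strict-versus-non-strict bookkeeping you flag ($\Pr(X > a) \leq \Pr(X \geq a)$) is harmless in the direction you use it, so the proof stands as written.
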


\begin{fact}\label{BinomNPlus1toN}
$F^{\text{Binom}}_{n+1,p}(i)\geq F^{\text{Binom}}_{n,p}(i-1)$ for all $n \in \mathbb{N}_+$, $p \in [0,1]$, and integers $i$ such that $1 \leq i \leq n$.
\end{fact}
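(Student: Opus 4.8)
The plan is to prove the inequality by a simple coupling of the two binomials on a common probability space, after which the claim reduces to an elementary event inclusion. First I would let $B_1, B_2, \ldots, B_{n+1}$ be independent Bernoulli random variables with success probability $p$, and set $Y := \sum_{j=1}^{n} B_j$ and $X := Y + B_{n+1} = \sum_{j=1}^{n+1} B_j$. By construction $Y \sim \text{Binom}(n,p)$ and $X \sim \text{Binom}(n+1,p)$, so that $\Pr(Y \leq i-1) = F^{\text{Binom}}_{n,p}(i-1)$ and $\Pr(X \leq i) = F^{\text{Binom}}_{n+1,p}(i)$. The two CDFs are thus realized as probabilities of events of a \emph{single} random experiment, which is the whole point of the coupling.

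The key observation is that, since $B_{n+1} \in \{0,1\}$, adding one extra trial can raise the success count by at most one: on the event $\{Y \leq i-1\}$ we have $X = Y + B_{n+1} \leq (i-1) + 1 = i$. Hence the pointwise inclusion $\{Y \leq i-1\} \subseteq \{X \leq i\}$ holds on the coupled space, and monotonicity of probability yields $F^{\text{Binom}}_{n,p}(i-1) = \Pr(Y \leq i-1) \leq \Pr(X \leq i) = F^{\text{Binom}}_{n+1,p}(i)$, which is exactly the claimed inequality. Note that this argument never invokes $i \leq n$, so the stated restriction $1 \leq i \leq n$ is only relevant to how the fact is subsequently applied, not to its validity.

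As an alternative that sidesteps an explicit coupling, I would instead condition on the outcome of the last trial: writing $X \sim \text{Binom}(n+1,p)$ as $Y \sim \text{Binom}(n,p)$ plus an independent $\text{Bernoulli}(p)$ gives $F^{\text{Binom}}_{n+1,p}(i) = (1-p)\,\Pr(Y \leq i) + p\,\Pr(Y \leq i-1)$, and since a CDF is nondecreasing, $\Pr(Y \leq i) \geq \Pr(Y \leq i-1)$, so the right-hand side is at least $\Pr(Y \leq i-1) = F^{\text{Binom}}_{n,p}(i-1)$. I do not anticipate any genuine obstacle here: the result is essentially immediate once the two binomials live on the same space, and the only point needing (minor) care is verifying that one extra Bernoulli trial shifts the CDF argument by exactly one. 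A purely combinatorial route through Pascal's rule $\binom{n+1}{k} = \binom{n}{k} + \binom{n}{k-1}$ would also succeed but is messier than either probabilistic argument, so I would not pursue it.
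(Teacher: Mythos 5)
Your proposal is correct, and your ``alternative'' conditioning argument is precisely the paper's own one-line proof: the paper writes $F^{\text{Binom}}_{n+1,p}(i)\geq (1-p)F^{\text{Binom}}_{n,p}(i)+pF^{\text{Binom}}_{n,p}(i-1)\geq F^{\text{Binom}}_{n,p}(i-1)$, using monotonicity of the CDF in the last step. Your primary coupling argument is just an equivalent probabilistic rephrasing of the same decomposition (and your observation that the restriction $i\leq n$ is not needed for validity is accurate), so this is essentially the same approach.
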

\begin{proof}
\begin{align*}
F^{\text{Binom}}_{n+1,p}(i) \geq (1-p)F^{\text{Binom}}_{n,p}(i)+pF^{\text{Binom}}_{n,p}(i-1)\geq F^{\text{Binom}}_{n,p}(i-1)
\end{align*}

\end{proof}

\begin{fact}(Fact 1 in \citet{Agrawal2012})\label{lemma:BetaBinom}
\begin{align*}
F^{\text{Beta}}_{\alpha,\beta}(y)=1-F^{\text{Binom}}_{\alpha+\beta-1,y}(\alpha-1)
\end{align*}
for all $\alpha,\beta \in \mathbb{N}_+$ and $y\in [0,1]$.
\end{fact}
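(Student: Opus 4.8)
The plan is to prove this Beta--Binomial identity by a probabilistic order-statistics argument, which avoids any direct manipulation of the incomplete beta integral. First I would fix $n := \alpha+\beta-1$ and consider $n$ independent $\text{Uniform}(0,1)$ random variables $U_1,\ldots,U_n$. The classical fact about uniform order statistics is that the $\alpha$-th smallest value $U_{(\alpha)}$ is distributed as $\text{Beta}(\alpha, n-\alpha+1) = \text{Beta}(\alpha,\beta)$; this can be read off from the joint density of order statistics, or obtained by a short induction, and it is exactly the ingredient that ties the two sides together.

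Next I would relate the event $\{U_{(\alpha)} \leq y\}$ to a binomial count. For a threshold $y \in [0,1]$, let $N_y := \sum_{k=1}^n \mathbf{1}_{\{U_k \leq y\}}$ be the number of samples falling in $[0,y]$, so that $N_y \sim \text{Binomial}(n,y)$. The order statistic $U_{(\alpha)}$ is at most $y$ precisely when at least $\alpha$ of the $U_k$ lie below $y$, i.e.\ $\{U_{(\alpha)} \leq y\} = \{N_y \geq \alpha\}$ (ties occur with probability zero since the $U_k$ are continuous). Taking probabilities gives
\begin{align*}
F^{\text{Beta}}_{\alpha,\beta}(y) = \Pr(U_{(\alpha)} \leq y) = \Pr(N_y \geq \alpha) = 1 - \Pr(N_y \leq \alpha-1) = 1 - F^{\text{Binom}}_{n,y}(\alpha-1),
\end{align*}
which is exactly the claim once $n$ is substituted back as $\alpha+\beta-1$.

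The only genuine content---and the step I would treat as the main obstacle---is justifying the distribution of $U_{(\alpha)}$ cleanly, since everything else is bookkeeping. If I wanted to avoid invoking order statistics entirely, I would instead give a purely analytic proof: both sides vanish at $y=0$ (on the right only the $j=0$ term of the sum survives, and it equals $1$), so it suffices to check that their $y$-derivatives agree. Differentiating the right-hand sum and using $j\binom{n}{j} = n\binom{n-1}{j-1}$ together with $(n-j)\binom{n}{j} = n\binom{n-1}{j}$ makes the sum telescope, leaving the single term $(\alpha+\beta-1)\binom{\alpha+\beta-2}{\alpha-1}\,y^{\alpha-1}(1-y)^{\beta-1}$; recognizing that $(\alpha+\beta-1)\binom{\alpha+\beta-2}{\alpha-1} = (\alpha+\beta-1)!/\big((\alpha-1)!(\beta-1)!\big) = 1/B(\alpha,\beta)$ shows this equals the $\text{Beta}(\alpha,\beta)$ density, which matches $\frac{d}{dy}F^{\text{Beta}}_{\alpha,\beta}(y)$. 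Either route closes the argument; I would present the order-statistics version as the primary proof for its brevity and keep the telescoping computation as the fallback.
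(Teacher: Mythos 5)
Your argument is correct, and it is worth noting up front that the paper does not actually prove this statement: it is imported verbatim as Fact~1 of \citet{Agrawal2012}, where it is likewise stated without proof as a classical Beta--Binomial duality. So there is no ``paper proof'' to compare against; what you have supplied is a self-contained justification of a cited fact. Your primary route is the standard one: with $n=\alpha+\beta-1$ and $U_{(\alpha)}$ the $\alpha$-th order statistic of $n$ i.i.d.\ uniforms, the identity $\{U_{(\alpha)}\leq y\}=\{N_y\geq\alpha\}$ together with $U_{(\alpha)}\sim\text{Beta}(\alpha,n-\alpha+1)=\text{Beta}(\alpha,\beta)$ gives the claim in one line, and the parameter bookkeeping ($n-\alpha+1=\beta$) checks out. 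The analytic fallback is also sound: both sides vanish at $y=0$, and the telescoping derivative of $1-\sum_{j=0}^{\alpha-1}\binom{n}{j}y^j(1-y)^{n-j}$ collapses to $n\binom{n-1}{\alpha-1}y^{\alpha-1}(1-y)^{\beta-1}=y^{\alpha-1}(1-y)^{\beta-1}/B(\alpha,\beta)$, the $\text{Beta}(\alpha,\beta)$ density. Either version would serve as a complete proof were the paper to include one; the only caveat is that the restriction to positive integer $\alpha,\beta$ is essential (the identity fails for non-integer parameters), which your setup respects since $s_i+1$ and $f_i+1$ are always positive integers in the CTS analysis.
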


Next, as in the CUCB-$\kappa$ case, we prove that $\nu_i^{t+1}$ is lower than some constant with high probability for $t$ sufficiently large. 
Recalling that $\Delta_{t+1}^0=|\hat{\mu}_i^{t+1}-\mu_i|$, we have the following by the law of total probability:
\begin{align}
 &\Pr(|\nu_i^{t+1}-\mu_i|\geq \delta) \notag \\
 &=\Pr(|\nu_i^{t+1}-\mu_i|\geq \delta,\Delta_{t+1}^0\geq \delta/2)+\Pr(|\nu_i^{t+1}-\mu_i|\geq \delta,\Delta_{t+1}^0<\delta/2) \notag \\&\leq \Pr(\Delta_{t+1}^0\geq \delta/2) +\Pr(|\nu_i^{t+1}-\mu_i|\geq \delta,\Delta_{t+1}^0<\delta/2) \label{eq:actualeq}. 
\end{align}

By Lemma \ref{lemma:sample}, we already have an upper bound for $\Pr(\Delta_{t+1}^0\geq \delta/2)$ for $t\geq t'$. Note that $|\nu_i^{t+1}-\mu_i|\geq \delta$ and $\Delta_{t+1}^0<\delta/2$ implies that $|\nu_i^{t+1}-\hat{\mu}_i^{t+1}|\geq \delta/2$ by the triangle inequality.
 Hence, we have 
\begin{align}
\Pr(\Delta_{t+1}^0< \delta/2, |\nu_i^{t+1}-\mu_i|\geq \delta)\leq \Pr(|\nu_i^{t+1}-\hat{\mu}_i^{t+1}|\geq \delta/2) \label{eq:remembernu}.
\end{align}
Now, we will find an upper bound for the RHS of this inequality. First, notice that we have the following
\begin{align}
&\Pr(|\nu_i^{t+1}-\hat{\mu}_i^{t+1}|\geq \delta/2)\notag\\&=\sum_{j=0}^{t} \Pr(|\nu_i^{t+1}-\hat{\mu}_i^{t+1}|\geq \delta/2| T_i^{t+1}=j)\Pr(T_i^{t+1}=j)\notag \\ 
&\leq\sum_{j=0}^{t} \Pr(\nu_i^{t+1}-\hat{\mu}_i^{t+1}\geq \delta/2| T_i^{t+1}=j)\Pr(T_i^{t+1}=j) \notag \\
&+ \sum_{j=0}^{t} \Pr(\nu_i^{t+1}-\hat{\mu}_i^{t+1}\leq -\delta/2 | T_i^{t+1}=j)\Pr(T_i^{t+1}=j) \label{eqn:thomphat}.
\end{align}

In the expression below, the second equality comes from the Fact \ref{lemma:BetaBinom} and the last inequality is obtained via the Lemma \ref{BinomHoeffding}:
 
\begin{align}
\Pr(\nu_i^{t+1}-\hat{\mu}_i^{t+1}\geq \delta/2| T_i^{t+1}=j)\notag &=\mathbb{E}[1-F^{\text{Beta}}_{1+s_i^{t+1},j-s_i^{t+1}+1}(\hat{\mu}_i^{t+1}+\delta/2)
|  T_i^{t+1}=j ] 
\notag \\&=\mathbb{E}[F^{\text{Binom}}_{j+1,\hat{\mu}_i^{t+1}+\delta/2}(s_i^{t+1})
| T_i^{t+1}=j] 
\notag \\&\leq \mathbb{E}[F^{\text{Binom}}_{j,\hat{\mu}_i^{t+1}+\delta/2}(s_i^{t+1}) |
T_i^{t+1}=j] \notag \\&= \mathbb{E}[F^{\text{Binom}}_{j,\hat{\mu}_i^{t+1}+\delta/2}(j\hat{\mu}_i^{t+1}) | T_i^{t+1}=j] \notag \\&\leq e^{-\delta^2 j/2}. \label{eq:rightnu}
\end{align}

Similarly, by Fact \ref{lemma:BetaBinom} and Lemma \ref{BinomHoeffding}, we also have 

\begin{align}
\Pr(\nu_i^{t+1}-\hat{\mu}_i^{t+1}\leq -\delta/2| T_i^{t+1}=j) &=\mathbb{E}[F^{\text{Beta}}_{1+s_i^{t+1},j-s_i^{t+1}+1}(\hat{\mu}_i^{t+1}-\delta/2) | T_i^{t+1}=j] \notag \\&=\mathbb{E}[1-F^{\text{Binom}}_{j+1,\hat{\mu}_i^{t+1}-\delta/2}(s_i^{t+1})| T_i^{t+1}=j] \notag \\
&\rev{=} \mathbb{E}[1-F^{\text{Binom}}_{j+1,\hat{\mu}_i^{t+1}-\delta/2}(j\hat{\mu}_i^{t+1}) |T_i^{t+1}=j] \notag \\
&\leq e^{2\delta-\delta^2j/2}. \label{eq:leftnu}
\end{align}

Using \eqref{eq:rightnu} and \eqref{eq:leftnu} in \eqref{eqn:thomphat}, we obtain
\begin{align*}
\Pr(|\nu_i^{t+1}-\hat{\mu}_i^{t+1}|\geq \delta/2)\leq (e^{2\delta}+1)\sum_{j=0}^{t} e^{-\delta^2 j /2} \Pr(T_i^{t+1}=j) . 
\end{align*}

For $0 \leq j < t^*=\eta p^* t$, we have $\Pr(T_i^{t+1}=j)\leq \Pr(T_i^{t+1}\leq t^{*}) \leq  1/t^2$ when $t\geq t'$ by Lemma \ref{lemma:plays2}. We also have $\sum_{j=\lceil t^*\rceil}^{t} \Pr(T_i^{t+1}=j)\leq 1$. Using these, we proceed as follows: For $t \geq t'$, we have
\begin{align}
	&\frac{\Pr(|\nu_i^{t+1}-\mu_i|\geq \delta,\Delta_{t+1}^0<\delta/2)}{e^{2\delta}+1}\notag\\&\leq\frac{\Pr(|\nu_i^{t+1}-\hat{\mu}_i^{t+1}|\geq \delta/2)}{e^{2\delta}+1} \notag\\&\leq \sum_{j=0}^{\lfloor t^*\rfloor} e^{-\delta^2 j /2}\Pr(T_i^{t+1}=j)+\sum_{j=\lceil t^*\rceil}^{t} e^{-\delta^2 j /2}\Pr(T_i^{t+1}=j) \notag\\& \leq \frac{1}{t^2}\sum_{j=0}^{\lfloor t^*\rfloor} e^{-\delta^2 j/2}+ e^{-\delta^2 \lceil t^*\rceil /2 }\sum_{j=\lceil t^*\rceil}^{t}\Pr(T_i^{t+1}=j)\notag\\&\leq \frac{1}{t^2(1-e^{-\delta^2/2})} + e^{-\delta^2\lceil t^*\rceil/2}\notag \\&\leq \frac{1}{t^2(1-e^{-\delta^2/2})} + e^{-\delta^2\eta p^* t/2}. \label{eq:nuActualEq}
\end{align}

Using the result of Lemma \ref{lemma:sample} and \eqref{eq:nuActualEq} in \eqref{eq:actualeq}, and recalling the inequality in \eqref{eq:remembernu}, we obtain
\begin{align*}
\Pr(|\nu_i^{t+1}-\mu_i|\geq \delta) \leq (3+e^{2\delta}) \Big[\frac{1}{t^2(1-e^{-\delta^2/2})}+e^{-\delta^2\eta p^* t/2}\Big]
\end{align*}
for $t\geq t'$, and for any $i\in \{1,\ldots,m\}$ and $\eta \in (0,1)$.

Finally by using the union bound, we obtain
\begin{align*}
\Pr \left(\bigcup_i\{|\nu_i^{t+1}-\mu_i|\geq \delta\} \right) &\leq (3+e^{2\delta})m \Big[\frac{1}{t^2(1-e^{-\delta^2/2})}+e^{-\delta^2\eta p^* t/2}\Big]
\end{align*}
for $t\geq t'$.

\subsection{Proof of Theorem 6}
	We follow the procedure we use in Theorem 3. Fix any $\eta \in (0,1)$. Note that $\delta >0$ since $\nabla_{\text{min}} >0$.  For $T \leq \lceil t' \rceil$, the regret bound is $T \nabla_{\text{max}}$. Recall that $\Delta_{t}^{\bs{\nu}}= \max_{i}|\nu_i^{t}-\mu_i|$. By Lemma \ref{lemma:gap}, we have
	\begin{align*}
	\{\Delta_{t}^{\bs{\nu}} < \delta\} &\Rightarrow \Pr(r_{\bs{\mu}}(S_t) \geq \alpha r_{\bs{\mu}}^{*})\geq \beta \\&\Rightarrow \mathbb{E}[r_{\bs{\mu}}(S_t)]\geq \alpha\beta r_{\bs{\mu}}^{*} \Rightarrow {\cal N}_t \\&\Rightarrow \mathbb{E}[\text{IR}_{\bs{\mu},\alpha,\beta}^{\text{CTS}}(t)] \leq 0 .
	\end{align*} 
	Hence, we have the following for $T >\lceil t' \rceil$:
	\begin{align*}
	&\mathbb{E} \left[\sum_{t=1}^{T}\text{IR}_{\bs{\mu},\alpha,\beta}^{\text{CTS}}(t) \right] 
	=\mathbb{E} \left[\sum_{t=1}^{\lceil t' \rceil}\text{IR}_{\bs{\mu},\alpha,\beta}^{\text{CTS}}(t) \right]
	+\mathbb{E}\left[ \sum_{t=\lceil t' \rceil+1}^{T}\text{IR}_{\bs{\mu},\alpha,\beta}^{\text{CTS}}(t) \right]  
	\\ &\leq \nabla_{\text{max}}\lceil t' \rceil+ \sum_{t=\lceil t^{'} \rceil+1}^{T}\mathbb{E}[\text{IR}_{\bs{\mu},\alpha,\beta}^{\text{CTS}}(t)|\Delta_{t}^{\bs{\nu}} < \delta]\Pr(\Delta_{t}^{\bs{\nu}} < \delta) \\
	&+ \sum_{t=\lceil t^{'} \rceil+1}^{T}\mathbb{E}[\text{IR}_{\bs{\mu},\alpha,\beta}^{\text{CTS}}(t)|\Delta_{t}^{\bs{\nu}} \geq \delta]\Pr(\Delta_{t}^{\bs{\nu}} \geq \delta) 
	\\ &= \nabla_{\text{max}}\lceil t' \rceil
	+ \sum_{t=\lceil t' \rceil+1}^{T}\mathbb{E}[\text{IR}_{\bs{\mu},\alpha,\beta}^{\text{CTS}}(t)|{\cal N}_t]\Pr(\Delta_{t}^{\bs{\nu}} < \delta) \\
	&+\sum_{t=\lceil t' \rceil+1}^{T}\mathbb{E}[\text{IR}_{\bs{\mu},\alpha,\beta}^{\text{CTS}}(t)|\Delta_{t}^{\bs{\nu}} \geq \delta]\Pr(\Delta_{t}^{\bs{\nu}} \geq \delta) 
	\\ &\leq \nabla_{\text{max}}\lceil t' \rceil+ \sum_{t=\lceil t' \rceil+1}^{T}\mathbb{E}[\text{IR}_{\bs{\mu},\alpha,\beta}^{\text{CTS}}(t)|\Delta_{t}^{\bs{\nu}} \geq \delta] \\
	&\times \left(\frac{(3+e^{2\delta})m}{(t-1)^2(1-e^{-\delta^2/2})} 
	+(3+e^{2\delta})m e^{-\delta^2\eta p^*(t-1)/2} \right) \\ 
	&\leq \nabla_{\text{max}}\lceil t' \rceil
	+ \nabla_{\text{max}} \sum_{t=1}^{\infty} \left(\frac{(3+e^{2\delta})m}{t^2(1-e^{-\delta^2/2})}+(3+e^{2\delta})m e^{-\delta^2\eta p^* t/2} \right)
	\\&\leq \nabla_{\text{max}} \left(\lceil t' \rceil+ \frac{(3+e^{2\delta})m\pi^2}{6(1-e^{-\delta^2/2})} 
	+\frac{(3+e^{2\delta})me^{-\delta^2\eta p^*/2}}{1-e^{-\delta^2\eta p^*/2}} \right)
	\\&\leq \nabla_{\text{max}} \left(\lceil t' \rceil+\frac{(3+e^{2\delta})m\pi^2}{6(1-e^{-\delta^2/2})} 
	+\frac{(3+e^{2\delta})m}{1-e^{-\delta^2\eta p^*/2}} \right).
	\end{align*}
	
	We know that $\frac{1}{1-e^{-x}}\leq 1+\frac{1}{x}$ for $x>0$. Using this, we obtain
	\begin{align}
	\mathbb{E} \left[\sum_{t=1}^{T}\text{IR}_{\bs{\mu},\alpha,\beta}^{\text{CTS}}(t) \right] 
 \leq \nabla_{\text{max}} \left(\lceil t' \rceil
 +\frac{(3+e^{2\delta})m\pi^2}{6} \left( 1+\frac{2}{\delta^2} \right) \notag
	+(3+e^{2\delta})m \left(1+\frac{2}{\delta^2\eta p^*} \right) \right) . \label{eq:etamin3}
	\end{align}
Finally, since the regret bound above holds for any $\eta \in (0,1)$, we take the infimum over $\eta$.

	\subsection{Proof of Theorem 7}
	For $T \leq \lceil t' \rceil$, the regret bound is $T \nabla_{\text{max}}$. To find the regret bound for $T > \lceil t' \rceil$,
first, we show that the instantaneous regret $\text{IR}_{\bs{\mu},\alpha,\beta}^{\text{CTS}}(t)$ can be bounded by using $\Delta_{t}^{\bs{\nu}}$ and the bounded smoothness function $f$.
\begin{lemma} \label{lemma:al2}
For any integer $t\geq 1$, expectation vector $\bs{\mu}$, the instantaneous regret of CTS is bounded as follows:
\begin{align*}
\mathbb{E}[\text{IR}_{\bs{\mu},\alpha,\beta}^{\text{CTS}}(t)]\leq 2\mathbb{E}[f(\Delta_{t}^{\bs{\nu}})].
\end{align*}

\end{lemma}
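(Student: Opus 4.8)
The plan is to mirror the proof of Lemma \ref{lemma:al} essentially verbatim, exploiting the fact that CTS fits the generic $(\alpha,\beta)$-approximation framework set up in the Preliminaries. There the estimated expectation vector fed to the oracle is, for CTS, the Thompson sample $\bs{\nu}$, so that $\hat{\mu}^{(\alpha,\beta),t}_i = \nu_i^t$ and hence $\Delta_t^{(\alpha,\beta)} = \Delta_t^{\bs{\nu}}$. Crucially, Lemma \ref{lemma:gap} and its inequality \eqref{eqn:lemma1} were derived for an arbitrary input to the oracle and make no use of how the estimates are formed, so they transfer to CTS with $\Delta_t^{\kappa}$ replaced by $\Delta_t^{\bs{\nu}}$ throughout.

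Concretely, I would first invoke \eqref{eqn:lemma1}, which states that, conditioned on the sampled vector $\bs{\nu}$, the oracle returns an action $S_t$ with
\[
r_{\bs{\mu}}(S_t) \geq \alpha r_{\bs{\mu}}^* - 2 f(\Delta_t^{\bs{\nu}})
\]
with probability at least $\beta$, the probability being taken over the internal randomness of the oracle. The only work beyond citing this is to convert the ``with probability at least $\beta$'' statement into the $\alpha\beta r_{\bs{\mu}}^*$ term appearing in $\text{IR}_{\bs{\mu},\alpha,\beta}^{\text{CTS}}(t) = \alpha\beta r_{\bs{\mu}}^* - r_{\bs{\mu}}(S_t)$.

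To do this cleanly I would condition on $\bs{\nu}$ (which fixes $\Delta_t^{\bs{\nu}}$) and split on the sign of $\alpha r_{\bs{\mu}}^* - 2f(\Delta_t^{\bs{\nu}})$. When this quantity is non-negative, weighting it by the success probability (at least $\beta$) and using that the reward is non-negative on the failure event gives $\mathbb{E}[r_{\bs{\mu}}(S_t)\mid \bs{\nu}] \geq \beta\bigl(\alpha r_{\bs{\mu}}^* - 2f(\Delta_t^{\bs{\nu}})\bigr)$, whence $\alpha\beta r_{\bs{\mu}}^* - \mathbb{E}[r_{\bs{\mu}}(S_t)\mid\bs{\nu}] \leq 2\beta f(\Delta_t^{\bs{\nu}}) \leq 2 f(\Delta_t^{\bs{\nu}})$ using $\beta \leq 1$. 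When the quantity is negative, $2f(\Delta_t^{\bs{\nu}}) > \alpha r_{\bs{\mu}}^* \geq \alpha\beta r_{\bs{\mu}}^* \geq \alpha\beta r_{\bs{\mu}}^* - \mathbb{E}[r_{\bs{\mu}}(S_t)\mid\bs{\nu}]$, again by non-negativity of the reward, so the same bound holds. Taking expectation over $\bs{\nu}$ then yields $\mathbb{E}[\text{IR}_{\bs{\mu},\alpha,\beta}^{\text{CTS}}(t)] \leq 2\mathbb{E}[f(\Delta_t^{\bs{\nu}})]$.

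The step I expect to require the most care is precisely this bookkeeping of the factor $\beta$: inequality \eqref{eqn:lemma1} holds on an event of probability $\beta$, while the target involves $\alpha\beta r_{\bs{\mu}}^*$, so one must neither drop nor double-count the $\beta$, and one must use non-negativity of the reward $R$ (hence of $r_{\bs{\mu}}$) to control the complementary failure event and to handle the case where the bound $\alpha r_{\bs{\mu}}^* - 2f(\Delta_t^{\bs{\nu}})$ is negative. Everything else is a direct transcription of the CUCB-$\kappa$ argument.
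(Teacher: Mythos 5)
Your proposal is correct and follows essentially the same route as the paper, which proves this lemma in one line by taking expectations of inequality \eqref{eqn:lemma1} from Lemma \ref{lemma:gap} with the Thompson sample as the oracle's input. Your careful bookkeeping of the probability-$\beta$ event and the sign of $\alpha r_{\bs{\mu}}^* - 2f(\Delta_t^{\bs{\nu}})$ actually spells out details the paper glosses over, but the underlying argument is identical.
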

\begin{proof}
Taking expectation of both sides of the equation \eqref{eqn:lemma1} in Lemma \ref{lemma:gap}, we obtain the desired result.
\end{proof}
The next step is to bound $\mathbb{E}[\Delta_{t+1}^{\bs{\nu}}]$ for $t>\lceil t' \rceil$.

\begin{lemma} \label{UCBplusSample2}
For any $\eta \in (0,1)$, we have
\begin{align*}
\mathbb{E}[\Delta_{t+1}^{\bs{\nu}}] \leq (3+e^{2})mt^{-1/2}\Big(\sqrt{\frac{2\pi}{\eta p^*}}+ 3 \Big)
\end{align*}
for all integers $t \geq t' := 4c^2/e^2$, where $c := 1/(p^*(1-\eta))^2$.
\end{lemma}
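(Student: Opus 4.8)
The plan is to mirror the proof of Lemma \ref{UCBplusSample} (the CUCB-$0$ analogue), replacing the tail estimate of Theorem 2 with the Thompson-sampling tail estimate of Theorem \ref{thm:deltathompson}. Since $\Delta_{t+1}^{\bs{\nu}} \in [0,1]$, I would start from the layer-cake identity
\[
\mathbb{E}[\Delta_{t+1}^{\bs{\nu}}] = \int_{0}^{1} \Pr(\Delta_{t+1}^{\bs{\nu}} \geq \delta)\,\mathrm{d}\delta,
\]
and split the range of integration at $\delta = t^{-\theta}$ for a parameter $\theta>0$ to be fixed later. On $[0,t^{-\theta}]$ I bound the probability trivially by $1$, contributing at most $t^{-\theta}$; on $[t^{-\theta},1]$ I insert the bound from Theorem \ref{thm:deltathompson}, valid for all integers $t \geq t'$.

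On $[t^{-\theta},1]$ the prefactor $(3+e^{2\delta})m$ is at most $(3+e^{2})m$, so it can be pulled out of the integral, leaving two pieces. For the sub-Gaussian piece I would enlarge the domain to the whole line and use the Gaussian integral $\int_{-\infty}^{\infty} e^{-\delta^{2}\eta p^{*}t/2}\,\mathrm{d}\delta = \sqrt{2\pi/(\eta p^{*}t)} = t^{-1/2}\sqrt{2\pi/(\eta p^{*})}$; this produces exactly the dominant term $(3+e^{2})m\,t^{-1/2}\sqrt{2\pi/(\eta p^{*})}$ of the target. For the $1/t^{2}$ piece I would apply the elementary inequality $1/(1-e^{-x}) \leq 1+1/x$ with $x=\delta^{2}/2$, i.e.\ $1/(1-e^{-\delta^{2}/2}) \leq 1 + 2/\delta^{2}$, and integrate $\int_{t^{-\theta}}^{1}(1+2/\delta^{2})\,\mathrm{d}\delta = [\delta - 2/\delta]_{t^{-\theta}}^{1} \leq 2t^{\theta}$, so this piece is at most $2(3+e^{2})m\,t^{\theta-2}$.

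Collecting the three contributions gives $\mathbb{E}[\Delta_{t+1}^{\bs{\nu}}] \leq t^{-\theta} + (3+e^{2})m\,t^{-1/2}\sqrt{2\pi/(\eta p^{*})} + 2(3+e^{2})m\,t^{\theta-2}$. Choosing $\theta=1$, which balances the two polynomial remainders and makes both of order $t^{-1}$, and then using $t^{-1}\leq t^{-1/2}$ together with $1 \leq (3+e^{2})m$ (valid since $m\geq 1$), I can absorb the leftover $t^{-\theta}$ and $t^{\theta-2}$ terms into $3(3+e^{2})m\,t^{-1/2}$, which yields the claimed bound.

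The argument is essentially routine once Theorem \ref{thm:deltathompson} is in hand, since it is the Thompson-sampling counterpart of Lemma \ref{UCBplusSample}. The only place requiring genuine care is the $1/t^{2}$ piece: the integrand $1/(1-e^{-\delta^{2}/2})$ blows up like $2/\delta^{2}$ near $0$, so the lower cut-off $t^{-\theta}$ is essential, and I must check that $\theta=1$ keeps this contribution at order $t^{-1}$ (hence negligible against the dominant $t^{-1/2}$ Gaussian term) while simultaneously keeping the trivial $[0,t^{-\theta}]$ contribution small. The final bookkeeping---verifying that the constants $1$ and $2(3+e^{2})m$ multiplying the $t^{-1}$ terms fit under the $3(3+e^{2})m$ budget---rests on $m\geq 1$ and is the last thing to confirm.
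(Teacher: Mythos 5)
Your proposal is correct and follows essentially the same route as the paper's proof: the same layer-cake identity, the same split at $\delta = t^{-\theta}$, the same bound $(3+e^{2\delta}) \leq (3+e^{2})$, the same Gaussian-integral evaluation of the exponential tail, the same use of $1/(1-e^{-x}) \leq 1 + 1/x$ for the $1/t^{2}$ piece, and the same choice $\theta = 1$ with the final absorption of the $t^{-1}$ remainders into $3(3+e^{2})m\,t^{-1/2}$ via $m \geq 1$. Nothing further is needed.
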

\begin{proof}
We have
\begin{align*}
\mathbb{E}[\Delta_{t+1}^{\bs{\nu}}] = \int_{0}^{1} \Pr(\Delta_{t+1}^{\bs{\nu}} \geq \delta)\text{d}\delta.
\end{align*}
Hence, by Theorem \ref{thm:deltathompson}, since $\Delta_{t+1}^{\bs{\nu}} \in [0,1]$, we have for any $\theta>0$
\begin{align*}
\mathbb{E}[\Delta_{t+1}^{\bs{\nu}}]  
&\leq \int_{0}^{t^{-\theta}} \Pr(\Delta_{t+1}^{\bs{\nu}} \geq \delta)\text{d}\delta+\int_{t^{-\theta}}^{1} \Big(\frac{(3+e^{2\delta})m}{t^2(1-e^{-\delta^2/2})} +(3+e^{2\delta})m e^{-\delta^2\eta p^* t/2}\Big)\text{d}\delta \\
&\leq t^{-\theta} +\int_{t^{-\theta}}^{1} \Big(\frac{(3+e^{2})m}{t^2(1-e^{-\delta^2/2})} +(3+e^{2})m e^{-\delta^2\eta p^* t/2}\Big)\text{d}\delta
\end{align*}
for $t>\lceil t' \rceil$.

Note that $e^{-\delta^2\eta p^* t/2}$ is just a scaled version of the probability distribution function of normal distribution. Let $B=\sqrt{1/(\eta p^*t)}$, then we have

\begin{align*}
\int e^{-\delta^2\eta p^* t/2}\text{d}\delta = \int e^{-\delta^2/(2B^2)}\text{d}\delta = B\sqrt{2\pi} \int \frac{1}{ B\sqrt{2\pi}} e^{\frac{-\delta^2}{2B^2}}\text{d}\delta.
\end{align*} 
Hence, setting $B=\sqrt{1/(2At)}$ ($A=\eta p^*/2$), via the fact that $1/(1-e^{-x})\leq 1 + 1/x$ for all $x>0$, we obtain
\begin{align*} 
\mathbb{E}[\Delta_{t+1}^{\bs{\nu}}] &\leq t^{-\theta}+ (3+e^{2})m\sqrt{\frac{\pi}{At}}+ \int_{t^{-\theta}}^{1} \Big(\frac{(3+e^{2})m}{t^2(1-e^{-\delta^2/2})} \Big)\text{d}\delta \notag \\&\leq t^{-\theta}+ (3+e^{2})m\sqrt{\frac{\pi}{At}}+ \frac{(3+e^{2})m}{t^2}\int_{t^{-\theta}}^{1} \Big(1+\frac{2}{\delta^2}\Big)\text{d}\delta \notag \\&=t^{-\theta}+(3+e^{2})m\sqrt{\frac{\pi}{At}}+(3+e^{2})mt^{-2}\bigg(\delta-\frac{2}{\delta}\bigg)\bigg|_{\delta=t^{-\theta}}^{\delta=1} \notag \\&=t^{-\theta}+(3+e^{2})m\sqrt{\frac{\pi}{At}}\\&+(3+e^{2})mt^{-2}\bigg(1-2-t^{-\theta}+2t^{\theta}\bigg)\notag 
\\&\leq t^{-\theta}+ (3+e^{2})m\sqrt{\frac{\pi}{At}}+ 2(3+e^{2})mt^{\theta-2}.  \notag
\end{align*}
The order of $t$ in this bound is minimized when $\theta=1$; thus, we have 

	\begin{align} 
\mathbb{E}[\Delta_{t+1}^{\bs{\nu}}] 
&\leq (3+e^{2})m\sqrt{\frac{\pi}{At}} + (2(3+e^{2})m + 1) t^{-1}
\\&\leq (3+e^{2})mt^{-1/2}\sqrt{\frac{\pi}{A}}+3(3+e^{2})mt^{-1/2} \notag 
 \\  &\leq (3+e^{2})mt^{-1/2}\Big(\sqrt{\frac{\pi}{A}}+3\Big) . \label{eq:gapindthomp}
\end{align} 

\end{proof}
Next, using Lemma \ref{UCBplusSample2}, we show that the gap-independent regret for CTS is at most $O(T^{1-w/2})$ for a large set of problems where the bounded-smoothness function $f(x)$ is $\gamma x^\omega$, for $\omega \in (0,1]$. 

Note that $f$ is concave. Hence, via Lemma \ref{lemma:al2}, \eqref{eq:gapindthomp} and Jensen's inequality, we have
\begin{align*}
\mathbb{E}[\text{IR}_{\bs{\mu},\alpha,\beta}^{\text{CTS}}(t+1)] &\leq 2\mathbb{E}[f(\Delta_{t+1}^{\bs{\nu}})] \\& \leq
2f(\mathbb{E}[\Delta_{t+1}^{\bs{\nu}}])
\\&\leq 2f\bigg((3+e^{2})m\sqrt{\frac{\pi}{At}} +3(3+e^{2})m\sqrt{\frac{1}{t}} \bigg) 
\\&\leq f\Big(2(3+e^{2})m\sqrt{\frac{\pi}{At}}\Big) 
+ f\Big(6(3+e^{2})m\sqrt{\frac{1}{t}}\Big)  
\\&=\gamma\bigg(\big(2(3+e^{2})m\big)^\omega \Big(\frac{\pi}{At}\Big)^{\omega/2}
+\big(6(3+e^{2})m\big)^{\omega} t^{-\omega/2} \bigg).
\end{align*}

Now, we can bound the regret by bounding the sum of expected instantaneous regrets throughout the time horizon:
\begin{align*}
Reg_{\bs{\mu},\alpha,\beta}^{\text{CTS}}(T)
 &= \sum_{t=1}^{T} \mathbb{E}[\text{IR}_{\bs{\mu},\alpha,\beta}^{\text{CTS}}(t)]
\\&\leq \sum_{t=1}^{\lceil t' \rceil} 
\mathbb{E}[\text{IR}_{\bs{\mu},\alpha,\beta}^{\text{CTS}}(t)]
+\sum_{t=\lceil t' \rceil + 1}^{T} \mathbb{E}[\text{IR}_{\bs{\mu},\alpha,\beta}^{\text{CTS}}(t)] 
\\& \leq \lceil t' \rceil \nabla_{\max} 
+  \gamma\Big(2m(3+e^2)\Big)^\omega\Bigg[ \Big(\frac{\pi}{A}\Big)^{\omega/2} 
+3^\omega \Bigg]\sum_{t=1}^{T}  t^{-\omega/2}.
\end{align*}

Using the fact that $\sum_{t=1}^{T} t^{-x} \leq T^{1-x}/(1-x)$ $\forall x \in (0,1)$, we have the following for $\omega \in (0,1]$:
\begin{align*}
Reg_{\bs{\mu},\alpha,\beta}^{\text{CTS}}(T) &\leq \lceil t' \rceil \nabla_{\max} +   \gamma\Big(2m(3+e^2)\Big)^\omega\Bigg[ \Big(\frac{\pi}{A}\Big)^{\omega/2} +3^\omega \Bigg]\frac{T^{1-w/2}}{1-w/2}.
\end{align*}

\bibliographystyle{spbasic}
\bibliography{references}

\end{document}